\renewcommand{\S}{\mathcal{S}}
\newcommand{\A}{\mathcal{A}}
\newcommand{\R}{\mathbb{R}}
\renewcommand{\P}{\mathbb{P}}
\newcommand{\E}{\mathbb{E}}
\newcommand{\NN}{\mathbb{N}}
\newcommand{\RSQVI}{RS-QVI}
\newcommand{\argmax}{\arg\!\max}
\newcommand{\cA}{\mathcal{A}}
\newcommand{\cS}{\mathcal{S}}
\newcommand{\cT}{\mathcal{T}}
\newcommand{\cX}{\mathcal{X}}
\newtheorem{theorem}{Theorem}
\newtheorem{definition}{Definition}
\declaretheorem[name=Lemma]{lemma}
\newtheorem{remark}{Remark}
\title{Recursive Entropic Risk Optimization in Discounted MDPs: \\Sample Complexity Bounds with a Generative Model}
\author{Oliver Mortensen\footnote{Department of Computer Science, University of Copenhagen. Email: {\tt\small olmo@di.ku.dk}.}
        \and Mohammad Sadegh Talebi\footnote{Department of Computer Science, University of Copenhagen. Email: {\tt\small sadegh.talebi@di.ku.dk}.}}
\date{\today}
\begin{document}

\maketitle

\begin{abstract}
We study risk-sensitive reinforcement learning in finite discounted MDPs with recursive entropic risk measures (ERM), where the risk parameter $\beta \neq 0$ controls the agent's risk attitude: $\beta>0$ for risk-averse and $\beta<0$ for risk-seeking behavior. A generative model of the MDP is assumed to be available. Our focus is on the sample complexities of learning the optimal state–action value function (value learning) and an optimal policy (policy learning) under recursive ERM. 
We introduce a model-based algorithm, called Model-Based ERM $Q$-Value Iteration (MB-\RSQVI), and derive PAC-type bounds on its sample complexity for both value and policy learning. Both PAC bounds scale exponentially with $|\beta|/(1-\gamma)$, where $\gamma$ is the discount factor. 
We also establish corresponding lower bounds for both value and policy learning, showing that exponential dependence on $|\beta|/(1-\gamma)$ is unavoidable in the worst case. The bounds are tight in the number of states and actions ($S$ and $A$), providing the first rigorous sample complexity guarantees for recursive ERM across both risk-averse and risk-seeking regimes. 
\end{abstract}

\section{Introduction}
\label{Section:Introduction}

In reinforcement learning (RL), the aim of the agent is to conventionally maximize the expected return, which is defined in terms of a (possibly discounted) sum of rewards \cite{sutton1998reinforcement}. The environment is typically  modeled via the Markov Decision Process (MDP) framework \cite{puterman2014markov}, wherein efficient computation of an optimal policy, thanks to optimal Bellman equations, renders possible. However, as a \emph{risk-neutral} objective, the expected return is inadequate to capture the true needs of many high-stake applications arising in, e.g.,  medical treatment \cite{ernst2006clinical}, finance \cite{scutella2013robust,bielecki1999risk}, operations research \cite{delage2010percentile}, and transportation \cite{kamran2020risk}. In these domains, decision making must take into account the variability of returns, and risks thereof. To address this limitation, one may opt to maximize a risk measure of the return distribution. Alternatively, one may model the entire distribution of return, as in distributional RL \cite{bellemare_distributional_2023}, which has received significant attention over the last decade. In this paper, we focus on the former. 

Within the first approach, the risk 
is elegantly quantified via concave risk measures, which lead to well-defined optimization problems. Notable risk measures include mean-variance \cite{li2000optimal}, value-at-risk (VaR) \cite{dempster2002risk}, Conditional VaR (CVaR) \cite{shapiro2021LNs}, entropic risk  \cite{howard1972risk}, and entropic VaR (EVaR) \cite{ahmadi2012entropic}, all of which have been applied to a wide-range of scenarios. Among these, CVaR has become particularly popular for modeling risk-sensitivity in MDPs \cite{chow2014algorithms,bisi2022risk,brown2020bayesian,bauerle_2011_markov}, mainly due to a delicate control it offers for the undesirable tail of return distribution. Despite its popularity and appealing interpretation, learning in MDPs with CVaR-based objectives may pose technical challenges  \cite{bauerle_2011_markov}. ERM, as another popular notion, has long been considered for risk-sensitive control in MDPs and RL \cite{howard1972risk,borkar2002risk,hau2023entropic,hu2023tighter,fei2020risk}. However, much of the existing literature focuses on undiscounted settings, despite the prevalence of discounted MDPs; 
see, e.g., \cite{bauerle_more_2014,hau2023entropic,mihatsch2002risk} for notable exceptions.

In risk-sensitive RL with a specified risk measure, objectives can be formulated in two fundamentally different ways, depending on 
how the risk functional is applied to the reward sequence $(r_t)_{t\ge 0}$. The first approach, referred to as \emph{non-recursive} (also called non-iterated or static), consists in directly applying the risk functional to the total return (e.g., $\sum_{t=0}^\infty \gamma^t r_t$ in  the discounted case) \cite{borkar2002q,borkar2002risk,hau2023dynamic}. The second, termed \emph{recursive} (also called iterated, nested, or dynamic), applies the risk functional at every step $t$ to the reward-to-go \cite{asienkiewicz2017note,bauerle_markov_2022,deng2025near}; see Section \ref{Section:Background} for details. In other words, the non-recursive formulation captures  trajectory-level risk, whereas the recursive formulation deals with risk locally at each step. The two formulations are qualitatively different and should be viewed as orthogonal modeling choices. While trajectory-level risk is often easier to interpret, it may lead to policies that allow the agent to visit high-risk states, even though the risk of the entire trajectory is still controlled, which might be unacceptable in many safety-critical applications. 
In contrast, the recursive formulation may lead to more cautious behavior by discouraging entry into high-risk states at every step, which can be either desirable or overly conservative depending on the application \cite{deng2025near}. From a theoretical perspective, another key distinction is that non-recursive formulations do not generally admit Bellman-type optimality equations and may lead to time-inconsistent optimal policies (see \cite{jaquette1976utility}), whereas recursive formulations preserve Bellman-type optimality structures. Motivated by these considerations, we study risk-sensitive discounted RL with objectives defined via the recursive ERM.

\subsection{Main Contributions and Paper Organization}

We consider risk-sensitive RL in tabular discounted MDPs under recursive ERM, encompassing both the risk-averse and risk-seeking regimes. We assume that the agent has access to a generative model of the MDP,  namely, a simulator that generates samples from the true MDP for arbitrary state-action pairs.
Learning performance is assessed in terms of sample complexity, defined as the total number $T$ of samples required, for given $(\varepsilon,\delta)$, to obtain either an $\varepsilon$-optimal policy (\emph{policy learning}) or an $\varepsilon$-close approximation of the optimal Q-value in the max-norm (\emph{value learning}), with probability exceeding $1-\delta$.

We make the following contributions. We develop a model-based algorithm, called Model-Based ERM Q-Value Iteration (MB-\RSQVI), and establish PAC-type bounds on its sample complexity for both value learning and policy learning. MB-\RSQVI\ is based on a plug-in estimation of the transition kernel combined with a Q-value iteration scheme adapted to recursive ERM objectives. This QVI structure is inspired by the value iteration method of \cite{bauerle_markov_2022}, which considers the risk-averse planning setting with known dynamics. 
We show that its sample complexity for value learning (Theorem \ref{thm:MBRSQVI_UB_Q}) and policy learning (Theorem \ref{thm:MBRSQVI_UB_policy}) scales as\footnote{
Throughout the paper, $\widetilde{\mathcal O}$ and $\widetilde{\Omega}$ suppress logarithmic factors in the relevant problem parameters.} 
\begin{align*}
    \widetilde{O}\bigg(\frac{SA}{\varepsilon^2(1-\gamma)^2|\beta|^2}e^{2|\beta|/(1-\gamma)}\bigg) \qquad\hbox{and}\qquad \widetilde{O}\bigg(\frac{SA}{\varepsilon^2(1-\gamma)^2|\beta|^2}\min\Big\{S,\frac{1}{(1-\gamma)^2}\Big\}e^{2|\beta|/(1-\gamma)}\bigg)\,,
\end{align*}
respectively. 
These bounds hold for any discounted MDP with $S$ states, $A$ actions, discount factor $\gamma$, and risk parameter $\beta$, with $\beta>0$ (respectively, $\beta<0$) corresponding to a risk-averse (respectively, risk-seeking) agent; see Section \ref{Section:Background} for a precise definition. 
Moreover, the bounds are valid over the entire $\varepsilon$-range, namely $\varepsilon\in(0,\frac{1}{1-\gamma}]$. A notable feature of these results is the exponential dependence on the effective horizon $1/(1-\gamma)$, which is absent in the conventional risk-neutral setting, wherein $\beta=0$.  

We further establish worst-case lower bounds on the sample complexity of recursive ERM. Specifically, we show that for value learning (Theorem \ref{Thm:LowerBoundQ-function}) and policy learning (Theorem \ref{Thm:LowerBoundPolicy}), at least 
\begin{align*}
    \widetilde{\Omega}\bigg(\frac{SA}{\varepsilon^2|\beta|^2}e^{|\beta|/(1-\gamma)}\bigg) 
\end{align*}
samples are required to achieve $\varepsilon$-optimality. These lower bounds demonstrate that the exponential dependence on $|\beta|/(1-\gamma)$ in sample complexity upper bounds is unavoidable, thereby
establishing that learning under recursive ERM is fundamentally more challenging than in the risk-neutral case. 
To the best of our knowledge, these results constitute the first upper and lower bounds on the sample complexity of recursive ERM in discounted MDPs. A summary of our results is provided in Table \ref{tab:bounds_summary}.

The remainder of this paper is organized as follows. Section \ref{Section:RelatedWork} reviews related work. Section \ref{Section:Background} introduces the necessary background and formal problem setup. Section  \ref{Section:Model-basedVI} presents the MB-\RSQVI\ algorithm, while Section \ref{sec:sample_complexity_analysis} reports its sample complexity guarantees, with proofs deferred to  Section \ref{sec:proof_UBs}. Lower bounds are presented in Section \ref{Section:LowerBounds}. Section \ref{Section:Experiments} presents  numerical results to demonstrate the performance of MB-\RSQVI. Finally, Section \ref{Section:Conclusions} concludes with a discussion and directions for future research. Additional background on risk measures, along with omitted proofs, is provided in the appendix.

\begin{table*}[!t]
    \centering
    {\footnotesize
    \begin{tabular}{c|c|c}
    \toprule
         {Problem} & {Upper Bound} & {Lower Bound}\\ \midrule 
         ERM (value learning)  & $\widetilde{\mathcal{O}}\Big(\frac{SA}{\varepsilon^2(1-\gamma)^2|\beta|^2}e^{2|\beta|/(1-\gamma)}\Big)$ \,\, [Theorem \ref{thm:MBRSQVI_UB_Q}] & $\widetilde\Omega\Big(\frac{SA}{\varepsilon^2|\beta|^2}e^{|\beta|/(1-\gamma)}\Big)$ \,\, [Theorem \ref{Thm:LowerBoundQ-function}]\\ \midrule
         ERM (policy learning) & $\widetilde{\mathcal{O}}\Big(\frac{SA}{\varepsilon^2(1-\gamma)^2|\beta|^2}\min\big\{S,\frac{1}{(1-\gamma)^2} \big\}e^{2|\beta|/(1-\gamma)} \Big)$ \,\, [Theorem \ref{thm:MBRSQVI_UB_policy}] & $\widetilde\Omega\Big(\frac{SA}{\varepsilon^2|\beta|^2}e^{|\beta|/(1-\gamma)}\Big)$\,\, [Theorem \ref{Thm:LowerBoundPolicy}] \\ \hline
         Risk-neutral (value learning) & $\widetilde{\mathcal{O}}\Big(\frac{SA}{\varepsilon^2(1-\gamma)^3}\Big)$ \cite{gheshlaghi2013minimax} & $\widetilde\Omega\Big(\frac{SA}{\varepsilon^2(1-\gamma)^3}\Big)$ \cite{gheshlaghi2013minimax} \\ \hline
         Risk-neutral (policy learning) & $\widetilde{\mathcal{O}}\Big(\frac{SA}{\varepsilon^2(1-\gamma)^3}\Big)$ \cite{gheshlaghi2013minimax,agarwal_model-based_2020,li2020breaking,jin2024truncated} & $\widetilde\Omega\Big(\frac{SA}{\varepsilon^2(1-\gamma)^3}\Big)$ \cite{gheshlaghi2013minimax} \\ \bottomrule
    \end{tabular}
    }
    \caption{Summary of upper and lower bounds presented in this paper. $\beta$ denotes the risk parameter, where $\beta>0$ (respectively,~$\beta<0$) corresponds to a risk-averse (respectively,~risk-seeking) agent.}
    \label{tab:bounds_summary}
\end{table*}

\section{Related Work}
\label{Section:RelatedWork}

\paragraph{Risk-neutral discounted RL.} There is a large body of papers on provably-sample efficient learning algorithms in tabular discounted MDPs, encompassing a variety of settings such as  the generative setting \cite{kakade2003sample,gheshlaghi2013minimax,agarwal_model-based_2020,sidford2018variance,li2020breaking,jin2024truncated}, the offline (or batch) setting \cite{rashidinejad2021bridging,li2024settling}, and the online setting \cite{strehl_analysis_2008,lattimore_near-optimal_2014}. In our overview of risk-neutral work, we restrict attention to the generative setting ---which is the setting we consider--- with the aim of collecting most notable developments and key results. 
In this line, \cite{kearns1998phased_Q_learning} reports the earliest known sample complexity bounds, which is achieved by a model-free method. Azar et al.~\cite{gheshlaghi2013minimax} substantially improve upon this by showing that a simple model-based method attains optimal sample complexity bounds scaling as $\widetilde{\cal O}\big(\frac{SA}{\varepsilon^2(1-\gamma)^3}\big)$ for both value learning and policy learning, albeit for substantially limited $\varepsilon$-ranges. It also establishes a lower bound of $\widetilde{\Omega}\big(\frac{SA}{\varepsilon^2(1-\gamma)^3}\big)$ for value learning. 
Further algorithms and results are reported in more recent subsequent work, notably including \cite{sidford2018variance,wang2020randomized,li2024breaking,jin2024truncated}. Among these,   \cite{sidford2018variance,wang2020randomized,jin2024truncated} present model-free methods, with \cite{sidford2018variance,jin2024truncated} presenting minimax-optimal bounds, although valid for restricted $\varepsilon$-ranges. Under model-based methods, minimax-optimal bounds, beyond \cite{gheshlaghi2013minimax}, are reported in \cite{agarwal_model-based_2020,li2024breaking}. In particular, \cite{agarwal_model-based_2020} uses empirical MDP combined with a black-box planner, and reports a minimax-optimal bound for $\varepsilon\in\big(0,\frac{1}{\sqrt{1-\gamma}})$, 
thus expanding that in \cite{gheshlaghi2013minimax}. More recently, \cite{li2024breaking} establishes minimax-optimal bounds for the entire  $\varepsilon$-range, which are achieved by model-based methods built via the empirical MDP but with reward perturbations or conservative planning. It is worth emphasizing that  existing optimal sample complexities rely on techniques that crucially exploit the additivity of the return with respect to rewards; this structural property generally fails for risk-sensitive  objectives, and the corresponding techniques do not carry over.

We remark that the abovementioned works look at the learning performance in a worst-case scenario, which yield sample complexity bounds that hold for a model class. This is typically done via uniformly sampling various state-action pairs. 
In contrast, some studies (e.g., \cite{almarjani2021adaptive,zanette2019almost}) consider adaptive sampling to account for the heterogeneity across 
state-action space of the MDP, typically resulting in instance-dependent 
bounds.

\paragraph{Risk-sensitive RL.} 
There exists a substantial literature on decision making under a risk measure in bandit and RL settings. In bandits, risk-sensitive objectives are typically studied through regret minimization; see, e.g., \cite{sani2012risk,maillard2013robust,khajonchotpanya2021revised}. Extensions to MDPs  introduce substantially richer structural and algorithmic challenges, which are the focus of this work.
The literature on RL under risk measures may be broadly categorized by the type of risk measure studied. Representative examples include CVaR \cite{deng2025near,du2023provably,chen2024provably,lam2022risk}, ERM \cite{borkar2002risk,moharrami2025policy,marthe2023beyond,marthe2025efficient,hau2023entropic}, mean-variance risk \cite{sood2023deep,huang2022achieving,la2013actor}, and EVaR \cite{ni2022risk-evar,gangulyTMLRrisk-seeking}. Among these, CVaR has been the most extensively studied. 
Under non-recursive CVaR, \cite{du2023provably} and \cite{chen2024provably} investigate online episodic RL in the regret setting for tabular MDPs and MDPs with function approximation, respectively, while reward-free RL is studied in \cite{ni2024risk-rewardfree}. 
Under recursive CVaR, \cite{deng2025near} analyzes sample complexity in the generative setting; 
however, its analysis relies on structural properties specific to CVaR and does not extend to ERM.

ERM has also been widely studied, beginning with early work such as \cite{howard1972risk} and followed by a rich literature across diverse settings \cite{borkar2002risk,borkar2001sensitivity,borkar2002q,sadana2024mitigating,fei2020risk,fei2021risk,fei2021exponential,marthe2023beyond,marthe2025efficient,moharrami2025policy,hau2023entropic}. 
Under non-recursive ERM, most work focuses on the average-reward or episodic settings, with planning studied in \cite{howard1972risk,borkar2002risk,marthe2025efficient} and learning in \cite{borkar2002q,moharrami2025policy,marthe2023beyond}. 
The discounted setting has received comparatively little attention, largely due to technical challenges introduced by discounting; notable exceptions include planning results in \cite{bauerle_more_2014,hau2023entropic} and learning results in \cite{mihatsch2002risk}, which modify the problem formulation to address time inconsistency.
Under recursive ERM, recent works such as \cite{fei2020risk,fei2021exponential,fei2021risk,hu2023tighter,liang2024regret} study online episodic RL in the regret setting. 
To the best of our knowledge, existing work on discounted MDPs under recursive ERM is limited to planning; a notable example is \cite{bauerle_markov_2022}, which provides a thorough theoretical treatment but does not propose learning algorithms. The analysis in \cite{bauerle_markov_2022} (and the works in the known-model setting) is limited to the case of known models, where the problem does not involve statistical estimation. As a result, sample complexity analyses (under policy or value learning), which aim to characterize statistical hardness, are not relevant in this setting.

Some work in risk-sensitive RL and control studies broader classes of risk measures. Two notable classes studied in this context are coherent risk measures and optimized certainty equivalent (OCE) measures, both of which include important special cases such as mean–variance and CVaR. While ERM  is not coherent, it belongs to the OCE class for $\beta>0$; a brief overview of risk measures is provided in Appendix \ref{Section:Appendix:RiskMeasures}. 
Results for coherent risks \cite{petrik2012approximate,tamar2015policy,lam2022risk,zhao2024ra} do not apply to ERM, and existing results for OCE risks \cite{wang2025reductions,xu2023regret,rigter2023one} do not address provably sample-efficient learning under recursive ERM in discounted MDPs. In particular,   \cite{rigter2023one} considers offline RL in discounted MDPs under recursive OCE but does not provide sample-complexity guarantees. 
We also note that a connection between MDPs with recursive coherent risks and distributionally robust MDPs has been established in \cite{bauerle_markov_2022}. 
Finally, we note that approaches such as safe RL and constrained MDPs \cite{chow2018risk,chow2014framework} incorporate risk awareness into  policies via constraints, but without altering the definition of return; they are  
therefore generally regarded as orthogonal to the present setting.

\section{Background}
\label{Section:Background}

\paragraph{Notations.} For $n \in \NN$, let $[n] := \{1, \ldots, n\}$. $\mathbbmss 1_A$ denotes the indicator function of an event $A$. Given a set $\cX$, $\Delta(\cX)$ denotes the probability simplex over $\cX$. We use the convention that $\|\cdot \| := \|\cdot \|_\infty$ and explicitly use the subscript $\|\cdot \|_p$ when using $p$-norms for  $1\leq p<\infty$. The notation $L^\infty(\Omega, \mathcal{F},\mathbb{P})$ denotes the space of essentially bounded random variables on the probability space $(\Omega, \mathcal{F},\mathbb{P})$.

\subsection{Entropic Risk Preferences}
The entropic risk measure (ERM) is rooted in expected utility theory \cite{mas1995microeconomic}. Consider for $\beta\neq0$ the class of utility functions $u(t) = \frac{1}{\beta}(1-e^{-\beta t})$ defined for $t\in \mathbb{R}$. The utility $u$ is supposed to describe the preferences of some economic agent in the form of how much utility $u(t)$ she derives from some monetary quantity $t\in \mathbb{R}$. For any bounded random variable $X\in L^\infty(\Omega, \mathcal{F},\mathbb{P})$, it is easy to verify that the associated \emph{certainty equivalent} to $u$ is $u^{-1}(\E[u(X)]) = \frac{-1}{\beta}\log(\E[e^{-\beta X}])$, which expresses the amount of money that would give the same utility as that of entering in the bet given by the random variable $X$. We thus define the functional $\rho : L^\infty(\Omega,\mathcal{F},\mathbb{P}) \rightarrow \mathbb{R}$ by\footnote{We note that there is a lack of consensus regarding the sign of $\beta$ in the definition of ERM. We follow this convention considering its widespread use in the actuarial literature \cite{asienkiewicz2017note}. We refer to Appendix \ref{Section:Appendix:RiskMeasures} for a related discussion.} 
\begin{align}
\label{eq:ERM_functional}
    \rho(X) :=\rho(X; \beta):=-\frac{1}{\beta}\log\big(\E[e^{-\beta X}]\big)\,.
\end{align}
Evidently, when $\beta \rightarrow 0$ we recover the risk-neutral case, which simply coincides with the expectation: $\lim_{\beta\to 0} \rho(X) = \E[X]$. Further, it is straightforward to see that $\rho$ admits the following: 
\begin{align}
    \label{monotonicity_Phi}
    &\rho(X) \leq \rho(Y), \quad \text{for any }X\leq Y,
    \\
    \label{consistency_Phi}
    &\rho(c) = c, \quad \text{for any }c\in \mathbb{R},
    \\
    \label{jensen_beta>0}
    & \rho(X) \leq \E[X], \quad \text{for }\beta>0,
    \\
    \label{jensen_beta<0}
    & \rho(X) \geq \E[X], \quad \text{for }\beta<0,
\end{align}
where properties (\ref{jensen_beta>0})-(\ref{jensen_beta<0}) follow from Jensen's inequality. Using $\rho$ as a measure of the preference for different random variables, it follows directly from (\ref{consistency_Phi})-(\ref{jensen_beta<0}) that $\rho(X)\leq \rho(\E[X])$ for $\beta>0$ and that $\rho(X)\geq \rho(\E[X])$ for $\beta<0$. It further shows that $\beta>0$ is associated with \emph{risk-aversion}, while $\beta<0$ is associated with a \emph{risk-seeking} behavior. It is well-known that the ERM, unlike CVaR, does not belong to the nice class of coherent risk measures;  we refer the reader to Appendix \ref{Section:Appendix:RiskMeasures} for a primer on risk measures, where we collect some definitions and  concrete examples.

\subsection{Discounted Markov Decision Processes with Entropic Risk}
We write the 6-tuple $M = (\S,\A,P, R,\gamma, \beta)$ to denote a finite, discounted infinite-horizon Markov decision process (MDP), where $\S = \{1,2,\ldots,S\}$ is the finite state space of size $S:=|\S|$, $\A = \{1,2,\ldots,A\}$ is the finite action space of size $A:=|\A|$, $P:\S\times \A\rightarrow \Delta(\S)$ is the transition probability function, $R:\S\times \A\rightarrow [0,1]$ is the reward function, $\gamma \in (0,1)$ is the discount factor, and $\beta\neq 0$ is the risk-parameter. We use $Z = \cS\times \cA$ to denote the set of all state-action pairs, and write $P_{s,a}(s')$ as short-hand notation for $P(s'|s,a)$ for any $(s,a)\in Z$. For simplicity of exposition, we consider a deterministic reward function, as is standard in the literature. 
The agent interacts with the MDP $M$ as follows. At the beginning of the process, $M$ is in some initial state $s_0\in \S$. At each time $t\ge 0$, the agent is in state $s_t\in \S$ and decides on an action $a_t\in \A$ according to some rule. The MDP generates a reward $r_t:=R(s_t,a_t)$ and a next-state $s_{t+1}\sim P(\cdot|s_t,a_t)$. The MDP moves to $s_{t+1}$ when the next time slot begins, and this process continues ad infinitum. This process yields a growing sequence $(s_t,a_t,r_t)_{t\ge 0}$. The agent's goal is to maximize an objective function, as a function of the collected rewards $(r_t)_{t\ge 0}$, which depends on both $\gamma$ and $\beta$. 


To concretely define the agent's objective using ERM, we discuss two approaches of applying the functional $\rho$ in (\ref{eq:ERM_functional}) in the context of MDPs. 
The first approach, called \emph{non-recursive} (or static or non-iterated)  \cite{hau2023entropic,marthe2025efficient}, consists in applying   $\rho$ to the total discounted sum of rewards $\rho\big(\sum_{t=0}^\infty \gamma^t r_t\big)$, 
which is well-defined under the bounded rewards assumption, i.e., $r_t\in [0,1]$ for all $t$. 
This problem admits no obvious optimality equation, although solution and approximation schemes 
for planning have been proposed in the literature \cite{hau2023entropic,marthe2025efficient}. The other approach where $\rho$ is applied at every step is called \emph{recursive} (also called dynamic or iterated) \cite{asienkiewicz2017note}. The planning problem in this case is tractable thanks to existence of Bellman-type optimality equations. 
The recursive approach also guarantees the existence of an optimal stationary deterministic policy, whereas the non-recursive approach may lead to optimal policies that are not time-consistent (see \cite{jaquette1976utility}). In this paper, we study the case where the RL objective is defined via the recursive ERM.

\subsection{Value Function and Q-function}
We shall introduce some notation and definitions to formally define the
value function $V$ 
and state-action value function $Q$ (henceforth, $Q$-value) of a policy. 
We follow the approach of \cite{asienkiewicz2017note} and \cite{bauerle_markov_2024}, but since none of their cases include our $\beta<0$ case and also only cover 
value function, we present in Appendix \ref{Section:BellmanOptimality} the full setup with history-dependent policies as well as a thorough definition of the value and $Q$-values. There, we prove existence of a stationary optimal policy, and show that the value functions of this policy satisfy a Bellman optimality equation and that the value of any policy satisfies a Bellman recursion. We give an outline here that only deals with stationary policies, which is justified by the results of Appendix \ref{Section:BellmanOptimality}.

Let $v\in \mathbb{R}^S$ and $\pi:\S\rightarrow \A$ be a stationary deterministic policy. We  define $\rho_{s,a}:\mathbb{R}^S\rightarrow \mathbb{R}$ as 
\begin{align}
    \rho_{s,a}(v) = -\frac{1}{\beta}\log\Big(\E_{s'\sim P_{s,a}}[e^{-\beta v(s')}] \Big)
\end{align}
and slightly abusing the notation, we write $\rho_{s,\pi}$ when $a=\pi(s)$, i.e., $\rho_{s,\pi}:=\rho_{s,\pi(s)}$. 
We then introduce the operator $J_\pi : \R^S\rightarrow \R^S$ with  
$
    J_\pi(v)(s) = R(s,\pi(s))+\gamma\rho_{s,\pi}(v)
$. 
The $N$-step total discounted utility $J_N(s,\pi)$ is defined as applying $J_\pi$ recursively $N$ times to the $0$-map: 
$J_N(s,\pi):=J_\pi^N(\boldsymbol{0})(s)$. Note that the outer-most iteration corresponds to the immediate time-step. 
Then, the value of policy $\pi$ is defined as: $V^\pi(s) = \lim_{N\rightarrow \infty}J_N(s,\pi)$ for all $s\in \mathcal S$. 
By properties (\ref{monotonicity_Phi})-(\ref{consistency_Phi}), 
it follows that $J_N(s,\pi)$ is increasing in $N$ and that $J_N(s,\pi)\leq \frac{1}{1-\gamma}$ for all $s\in \S$, so that the limit above exists and the value function is thus well-defined.  
The optimal state-values are defined as $V^*(s) = \sup_{\pi} V^\pi(s)$ for all $s\in \S$, where the $\sup$ is taken over all possible policies. Any policy achieving $V^*(s)$ at all states is called optimal and $V^*$ is called the optimal value function. Further, given $\varepsilon>0$, a policy achieving $V^\pi(s)\ge V^*(s) - \varepsilon$ for all $s\in \S$ is called $\varepsilon$-optimal. 

In \cite{asienkiewicz2017note}, the authors consider a more general MDP framework that is not restricted to finite MDPs or stationary policies; they prove that under some conditions ---that are trivially fulfilled in the case of finite MDPs--- there exists a stationary deterministic optimal policy in the $\beta>0$ case. This result is easily extended to the $\beta<0$ case and we give a unified proof for completeness in Appendix \ref{Section:BellmanOptimality}. The optimal value function $V^*$ satisfies the Bellman optimality equation:
{
\begin{align*}
    V^*(s)=\max_{a\in \A}\bigg(R(s,a)\!-\!\frac{\gamma}{\beta}\log\bigg(\E_{s'\sim P_{s,a}}[e^{-\beta V^*(s')}] \bigg)\bigg)\,, \quad \forall s\in \S.
\end{align*}
}%
Further, for any stationary deterministic policy $\pi$, the value function satisfies the Bellman recursion:
{
\begin{align}
\label{eq:Bellman-value-policy}
    V^\pi(s)=R(s,\pi(s)) -\frac{\gamma}{\beta}\log\bigg( \E_{s'\sim P_{s,\pi(s)}}[e^{-\beta V^\pi(s')}]\bigg)\,,\quad \forall s\in \S.
\end{align}
}%

We introduce the $Q$-value functions using a similar approach. 
Given $\pi:\cS\to\cA$, we define the operator $L_\pi:\R^S\rightarrow \R^{S\times A}$ as follows: for all $v: \cS\to \R$, for all $(s,a)$, 
    $L_\pi(v)(s,a) = R(s,\pi(s))+\gamma \rho_{s,\pi}(v)\,.$ 
Also, we define the operator $L:\R^S\rightarrow \R^{S\times A}$ as follows: for all $v:\cS\rightarrow \R$, for all $(s,a)$, 
$L(v)(s,a) = R(s,a)+\gamma \rho_{s,a}(v)\,.$ 
We define the $N$-step total discounted utility of the state-action pair $(s,a)$ under $\pi$ as $L_N(s,a,\pi):=(L\circ J_\pi^{N-1}(\boldsymbol{0}))(s,a)$ and the limit is denoted $Q^\pi(s,a)$:
$Q^\pi(s,a) = \lim_{N\rightarrow \infty}L_N(s,a,\pi)$.  
Although \cite{asienkiewicz2017note} does not consider Q-value functions, building on their arguments we show in Appendix \ref{Section:BellmanOptimality} that it suffices to consider stationary policies when solving $\max_{\pi}Q^\pi(s,a)=:Q^*(s,a)$ for all $(s,a)$ and that $Q^*$ satisfies the optimality equation:
{
\begin{align*}
    Q^*(s,a)\!=\!R(s,a)\!-\!\frac{\gamma}{\beta}\log\Big(\E_{s'\sim P_{s,a}}\big[e^{-\beta\max_{a'}Q^*(s',a')}\big] \Big)\,, \quad \forall (s,a)\in \cS\times \cA.
\end{align*}
}%
Similarly, the $Q$-value of a policy $\pi$ satisfies the Bellman recursion:
\begin{align*}
    Q^\pi(s,a) = R(s,a)-\frac{\gamma}{\beta}\log\Big(\E_{s'\sim P_{s,a}}\big[e^{-\beta Q^\pi(s',\pi(s'))}\big] \Big)\,, \quad \forall (s,a)\in \cS\times \cA.
\end{align*}
Let us 
introduce the operators $\mathcal{T}^\pi,\mathcal{T}:\mathbb{R}^{S\!\times\!A}\rightarrow \mathbb{R}^{S\times A}$, which for $f:\cS\!\times\!\cA \rightarrow \mathbb{R}$ are defined as 
{
\begin{align*}
    (\mathcal{T}f)(s,a)
    &=R(s,a) - \frac{\gamma}{\beta}\log\sum_{s'}P_{s,a}(s')e^{-\beta \max_{a'}f(s',a')}\,,
    \\
    (\mathcal{T}^{\pi}f)(s,a) &= R(s,a)-\frac{\gamma}{\beta}\log\sum_{s'}P_{s,a}(s')e^{-\beta f(s',\pi(s'))}\,.
\end{align*}
}%
Then, the Bellman equations above can be written as $Q^* = \cT Q^*$ and $Q^\pi = \cT^\pi Q^\pi$.{\color{black}\footnote{\color{black}We note that our analysis only rests on the Bellman optimality equation; the Bellman equations for $V^\pi$ and $Q^\pi$ are included for completeness.}} 

We conclude this section by a remark 
about the case of rewards bounded in $[R_{\min}, R_{\max}]$. 

\begin{remark}
     For rewards bounded in $[R_{\min}, R_{\max}]$, one can equivalently consider rewards in $[0,1]$ \emph{but with a risk parameter $\frac{\beta}{w}$}, with $w :=R_{\max}-R_{\min}$. This is verified by observing that (i) $\rho$ is \emph{translation invariant}, implying that the range $w$ is important --not the absolute values-- so that one must model rewards as belonging to $[0,w]$; and (ii) one has  $\rho(wX,\beta)  = -\frac{w}{w\beta}\log(\E[e^{-\beta wX}] = w\rho(X,w\beta)$ for any $w>0$, so that scaling the range by $w$ amounts to working with $\rho$ with a risk parameter $w\beta$.
    This observation justifies our focus on rewards in $[0,1]$, but more importantly signals that the risk parameter is calibrated to a specific reward range. 
\end{remark}

\subsection{Learning Performance}
We consider RL algorithms that aim to find an $\varepsilon$-optimal policy or an $\varepsilon$-optimal value function for input $\varepsilon>0$  under ERM, while having access to a generative model (or simulator) of the MDP. Precisely speaking, the generative model can produce a sample $s'\sim P_{s,a}$ for any queried state-action $(s,a)$. We consider two types of such algorithms, which we generically denote by $\mathcal U$: 
The first type outputs a $Q$-value $Q_T^{\mathcal{U}}:\S\times \A \rightarrow \R$, whereas the second outputs a policy $\pi_T^\mathcal{U}:\cS\rightarrow \A$ using $T$  samples.  
We evaluate the quality of an algorithm that outputs a $Q$-value by $\|Q^*-Q_T^\mathcal{U}\|$. For an algorithm that instead outputs a policy, we evaluate it in terms of $\|V^*-V^{\pi_T^\mathcal{U}}\|$. Often, we will suppress $T$ from the notation. This leads to the notion of $(\varepsilon,\delta)$-correct value and policy for input parameters $(\varepsilon, \delta)$: 

\begin{definition}[$(\varepsilon,\delta)$-correct value and policy]
    An algorithm $\mathcal{U}$ that outputs a $Q$-value $Q^\mathcal{U}$ is called \emph{$(\varepsilon,\delta)$-value-correct} on a set of MDPs $\mathbb{M}$ if $\P(\|Q^*-Q^{\mathcal{U}}\|\leq \varepsilon)\geq 1-\delta$ for all $M\in \mathbb{M}$. Similarly, an algorithm $\mathcal{U}$ that outputs a policy $\pi^\mathcal{U}$ is called \emph{$(\varepsilon,\delta)$-policy-correct} on a set of MDPs $\mathbb{M}$ if $\P(\|V^*-V^{\mathcal{\pi^\mathcal{U}}}\|\leq \varepsilon)\geq 1-\delta$ for all $M\in \mathbb{M}$.
\end{definition}

The notion of $(\varepsilon,\delta)$-value-correctness yields a sample complexity notion in the case of \emph{value learning}, while $(\varepsilon,\delta)$-policy-correctness serves a similar role for \emph{policy learning}. 
 
We remark that any algorithm that outputs a $Q$-value also outputs a policy, e.g., the one obtained by acting greedily with respect to the $Q$-value. However, it is well-known that such a greedy policy can be off by a factor of $1/(1-\gamma)$, which impacts the corresponding sample complexity of policy learning; see discussions in \cite{singh1994upper,sidford2018variance,agarwal_model-based_2020}. To remedy this, the literature resort to proof arguments, which  may however come at a cost of limiting the $\varepsilon$-range, as briefly discussed in Section \ref{Section:RelatedWork}. 

\section{Model-Based ERM Q-Value Iteration}
\label{Section:Model-basedVI}
In this section, we present a model-based algorithm, called MB-\RSQVI, for value and policy learning settings under ERM, assuming access to a generative model of the MDP. Then, we derive PAC-type bounds on its sample complexity. 

We begin with introducing the protocol for obtaining $N$ samples from each state-action pair in $Z=\cS\times \cA$; this is done by making a total of $T=NSA$ calls to the generative model (see Algorithm \ref{alg:model-estimation}).

\begin{multicols}{2}
\footnotesize
\begin{algorithm}[H]

  \caption{Model estimation}
\label{alg:model-estimation}
\SetAlgoLined
\SetKwInput{KwInput}{Input}                
\SetKwInput{KwOutput}{Output}              
\DontPrintSemicolon

  {
  \KwInput{Generative model $P$}
  \KwOutput{Model estimate $\widehat{P}$}

  \SetKwFunction{FEstimateModel}{EstimateModel}
 
  \SetKwProg{Fn}{Function}{:}{}

\Fn{\FEstimateModel{$N$}}{
  $\forall$ $(s,z)\in \cS\times Z:$ $m(s,z) = 0$
  \;
  \For{each $z\in Z$}{
  \For{$i=1,2,\ldots,N$}{
  $s\sim P(\cdot|z)$
  \;
  $m(s,z) := m(s,z)+1$
  }
  $\forall s\in \cS: \widehat{P}(s,z) = \frac{m(s,z)}{N}$ 
  }
    \KwRet $\widehat{P}$
    }}
\end{algorithm}

\hfill

\footnotesize
\begin{algorithm}[H]
\label{Alg:QVI}
\SetAlgoLined
\SetKwInput{KwInput}{Input}                
\SetKwInput{KwOutput}{Output}              
\DontPrintSemicolon

  \KwInput{{\color{black}Empirical MDP} ${\color{black} \widehat M = (\cS,\cA,\widehat P,R,\gamma,\beta)}$, number of iterations $k$}
  \KwOutput{Estimate $Q_k$ of optimal $Q$-function $Q^*$}
  Initialization: $\forall (s,a)$ set $Q(s,a) =0$ \;
  \For{$j=0,1,\ldots,k-1$}{
  \For{all $(s,a)\in \cS\times \cA$}{
$Q_{j+1}(s,a) = R(s,a)-\frac{\gamma}{\beta}\log\big(\E_{s'\sim {\color{black}\widehat P_{s,a}}}\big[e^{-\beta \max_{a'}Q_j(s',a')}\big]\big)$ \;
  
  }
  }
  $\forall s\in \cS:$ $\pi_k(s) = \argmax_{a\in\cA}Q_k(s,a)$ \;
  \KwRet $Q_k$ and $\pi_k$

 
  \caption{\RSQVI}
\end{algorithm}
\end{multicols}
Let $\widehat{P}$ denote the plug-in estimator built using the $T=NSA$ independent samples obtained from the generative model; that is, for $(s,a,s')\in Z\times \S$, $\widehat{P}(s'|s,a) = \frac{n(s,a,s')}{N}$, where $n(s,a,s')$ denotes the number of times $s'$ was observed under the queried pair $(s,a)\in Z$. The model-based approach we describe relies on the empirical MDP formulated using $\widehat P$, $\widehat M=(\S,\A,R,\widehat{P},\gamma,\beta)$, but it is otherwise general in the sense that it can use any oracle that outputs an $\varepsilon$-optimal policy for any $\varepsilon>0$. We prove the existence of one such oracle in the analysis (cf.~Lemma \ref{lemma:QVI_epsilon_k}) in the form of a $Q$-value iteration akin to that of the classical risk-neutral setting. It is the basis for \RSQVI\ (Algorithm \ref{Alg:QVI}), which is a $Q$-value iteration for discounted MDPs with ERM with correctness guarantees. 

Equipped with these, we introduce MB-\RSQVI. For an input $\varepsilon>0$, the algorithm consists in:
\begin{itemize}
    \item[(i)] 
 building an empirical MDP  $\widehat{M} = (\cS,\cA,\widehat{P},R,\gamma,\beta)$ via calling the generative model $N$ times (namely, $\widehat{P} = \texttt{EstimateModel}(N)$);   
    \item[(ii)] 
 returning \RSQVI$(\widehat{M},k)$. 
\end{itemize}
We show in Lemma \ref{lemma:QVI_epsilon_k} how to pick $k:=k(\varepsilon)$ to ensure $\varepsilon$-value-correctness (i.e., $\|Q^*-Q_k\|\leq \varepsilon$) and $\varepsilon$-policy-correctness (i.e., $\|V^*-V^{\pi_k}\|\leq \varepsilon$).


\section{Sample Complexity Analysis of MB-\RSQVI}
\label{sec:sample_complexity_analysis}

In this section, we present a sample complexity analysis of MB-\RSQVI\ under both policy learning and value learning. 

\subsection{Properties of \RSQVI}

We first state two results for \RSQVI, establishing its convergence 
properties. Their proofs are presented in Appendix \ref{Appendix:UpperBounds}. 


\begin{lemma}[{Contraction properties}]
\label{lemma:QVI}
The operators $\mathcal{T}$ and $\mathcal{T}^\pi$ are $\gamma$-contractions with respect to the max-norm, i.e., 
$\|\mathcal{T}f_1-\mathcal{T}f_2\| \leq \gamma \|f_1-f_2\|$ and $\|\mathcal{T}^{\pi}f_1-\mathcal{T}^\pi f_2\| \leq \gamma \|f_1-f_2\|$ for value functions $f_1$ and $f_2$.
\end{lemma}

The proof of this result is very similar to that of Part (a) in Theorem 3.1 in \cite{asienkiewicz2017note}; nevertheless, we include it  
for completeness. 
The next lemma 
shows that for large enough $k$ in \RSQVI, we can obtain $Q_k$ and $V^{\pi_k}$ that are as close to, respectively, $Q^*$ and $V^*$ as desired:

\begin{lemma}
\label{lemma:QVI_epsilon_k}
    Fix $\varepsilon>0$. 
    Under \RSQVI\ (Algorithm \ref{Alg:QVI}), one has: (i) $\|Q_k-Q^*\|<\varepsilon$ if $k\ge \frac{-\log((1-\gamma)\varepsilon)}{\log(1/\gamma)}$; (ii) $\|V^{\pi_k}-V^*\|<\varepsilon$ if $k\ge \frac{\log 2 - \log((1-\gamma)^2\varepsilon)}{\log(1/\gamma)}$.
\end{lemma}



\subsection{Sample Complexity Upper Bounds} 
We are ready to present sample complexity bounds for MB-\RSQVI: Theorem \ref{thm:MBRSQVI_UB_Q} states such a result 
for the case of value learning, while Theorem \ref{thm:MBRSQVI_UB_policy} offers a bound for policy learning. 

\begin{theorem}[{Sample complexity, value learning}]\label{thm:MBRSQVI_UB_Q}
    For any $\varepsilon>0$, $\delta\in (0,1)$, and any MDP $M$ with $S$ states and $A$ actions, if the learner  makes 
    \begin{align*}
    T \ge \frac{2SA\gamma^2}{\varepsilon^2 (1-\gamma)^2}\bigg(\frac{e^{|\beta|/(1-\gamma)}-1}{|\beta|}\bigg)^2 \log\bigg(\frac{SA}{\delta}\bigg)
    \end{align*}
    model calls to the generative model, then $\P(\|Q^*-Q_k\|\leq \varepsilon)\geq 1-\delta$.
\end{theorem}

\begin{theorem}[{Sample complexity, policy learning}]
\label{thm:MBRSQVI_UB_policy}
    For any $\varepsilon>0, \delta\in(0,1)$, and any MDP $M$ with $S$ states and $A$ actions, if the learner makes
    \begin{align*}
   T\ge \frac{9SA\gamma^2}{\varepsilon^2 (1-\gamma)^2} \bigg(\frac{e^{|\beta|/(1-\gamma)}-1}{|\beta|}\bigg)^2 \min\Big\{\frac{\gamma^2}{(1-\gamma)^2} \log\bigg(\frac{4SA}{\delta}\bigg),\, \log\bigg(\frac{4SA|\Pi|}{\delta}\bigg)\Big\}
    \end{align*}
    model calls, then $\P(\|V^*-V^{\pi_k}\|\leq \varepsilon)\geq 1-\delta$. Here, $\Pi$ denotes the set of stationary deterministic policies. 
\end{theorem}


The sample complexity bound offered by Theorem \ref{thm:MBRSQVI_UB_policy} can be further simplified  using the worst-case bound $|\Pi|\le A^S$ to $\widetilde{\cal O}\big(\frac{SA}{\varepsilon^2 (1-\gamma)^2}\min\big\{S, \frac{1}{(1-\gamma)^2}\big\}\big)L^2\big)$ with $L=\frac{1}{|\beta|}\big(e^{|\beta|/(1-\gamma)}-1\big)$. To be more precise, by including log-terms,  
depending on whether $S\ll 1/(1-\gamma)^2$ in the problem at hand, one may obtain a bound of 
\begin{align*}
    {\cal O}\bigg(\frac{SA}{\varepsilon^2 (1-\gamma)^4}\log\bigg(\frac{4SA}{\delta}\bigg)L^2\bigg)\qquad \hbox{or} \qquad {\cal O}\bigg(\frac{SA}{\varepsilon^2 (1-\gamma)^2}\bigg(S+\log\bigg(\frac{4SA}{\delta}\bigg)\bigg)L^2\bigg)\,.
\end{align*}
Let us remark however that in problems where $|\Pi|$ grows polynomially with $S$, one will get a substantially better bound. 

\begin{remark}
  Taking the limit in the PAC bound of Theorems 
  \ref{thm:MBRSQVI_UB_Q}--\ref{thm:MBRSQVI_UB_policy}, as $\beta\rightarrow 0$, yields corresponding sample complexity bounds for the risk-neutral case. The resulting bound for value learning is off the optimal bound by a factor of $(1-\gamma)^{-1}$, and for 
  policy learning by a factor of $(1-\gamma)^{-1}\min\{S,(1-\gamma)^{-2}\}$; we again refer to Table \ref{tab:bounds_summary} for a related comparison. It is worth stressing, however, that these implied bounds are valid for the entire $\varepsilon$-range, unlike the results in, e.g.,  \cite{gheshlaghi2013minimax,agarwal_model-based_2020,sidford2018variance}.  
\end{remark}




\begin{remark}
Existing derivations of minimax sample complexity bounds in the risk-neutral setting (e.g., \cite{gheshlaghi2013minimax,agarwal_model-based_2020,sidford2018variance,li2020breaking}) rely on techniques that crucially exploit the additive structure of the return with respect to rewards, such as variance lemmas establishing Bellman consistency of the variance of cumulative discounted rewards. These tools do not extend to ERM due to its intrinsic non-linearity, and are therefore not applicable in our setting.
\end{remark}

\section{Proofs: Sample Complexity Upper Bounds}
\label{sec:proof_UBs}
In this section, we prove Theorems \ref{thm:MBRSQVI_UB_Q} and \ref{thm:MBRSQVI_UB_policy}. As preliminarily, we present some 
results that will be used in the proofs. The first one concerns  
basic decompositions of the error terms associated to $V^{\pi_k}$ and $Q_k$.   
Let $\widehat{V}^*$ and $\widehat{Q}^*$ denote the optimal value and Q-value in $\widehat{M}$, respectively, and $\pi^*$ denote an optimal policy in $M$.  
Then, 
for any $(s,a)\in \cS\times\cA$, 
\begin{align}
\label{equation:decomposition_Qvalue}
    Q_k(s,a) &\geq Q^*(s,a) - \| \widehat{Q}^{\pi^*}-Q^* \| - \|Q_k-\widehat{Q}^*\|\, ,\\
\label{equation:decomposition_policy}
     V^{\pi_k}(s)  &\geq V^*(s) - \|V^{\pi_k} - \widehat{V}^{\pi_k} \| - \|\widehat{V}^{\pi^*}-V^* \|      -\|\widehat{V}^{\pi_k} - \widehat{V}^* \|\,.
\end{align}
These follow from standard techniques, but for completeness, we derive them in Lemma \ref{lemma:decomposition} in Appendix \ref{Section:Appendix:TechnicalLemmas}. 
{\color{black}In (\ref{equation:decomposition_Qvalue}), the term $\| \widehat{Q}^{\pi^*}\!-Q^* \|$ captures the statistical hardness due to having the generative model, whereas the term $\|Q_k-\widehat{Q}^*\|$ represents the computational challenge and can be made desirably small after enough iterations of value iteration, and its control follows from the contraction property of ERM, which is also present in the case of known model. Similarly, in (\ref{equation:decomposition_policy}), the terms $\|V^{\pi_k} - \widehat{V}^{\pi_k} \|$ and $\|\widehat{V}^{\pi^*}-V^* \|$ correspond to the statistical hardness, while $\|\widehat{V}^{\pi_k} - \widehat{V}^* \|$ captures the computational hardness.}


The second result concerns smoothness of Q-values under ERM when the transition function is perturbed. More specifically, it asserts how different $Q$-values of a fixed policy are in two different MDPs that differ only slightly in their transition functions. This parallels the result in \cite{kearns2002near,strehl_analysis_2008} to ERMs. 

\begin{lemma}[$Q$-value smoothness under ERM]
\label{lemma:SimulationLemma}
Consider two MDPs $M=(S,A,P,R,\gamma, \beta)$ and $\widetilde{M}= (S,A,\widetilde{P},R,\gamma, \beta)$ differing only in their transition functions. Fix a stationary policy $\pi$, and let $Q^\pi$ and $\widetilde{Q}^\pi$ be respective $Q$-values of $\pi$ in $M$ and $\widetilde{M}$.
It holds that
$\|Q^\pi-\widetilde{Q}^\pi\| \leq \xi W_1$ for $\beta<0$, and $\|Q^\pi-\widetilde{Q}^\pi\| \leq \xi W_2$ for $\beta>0$, where  
  $\xi  := \frac{\gamma}{(1-\gamma)|\beta|}e^{|\beta|/(1-\gamma)}$  
and
\begin{align*}
    W_1 &:= \max_{s,a}\big|\sum_{s'\in \S}[P_{s,a}(s')-\widetilde{P}_{s,a}(s')]e^{-|\beta|(\frac{1}{1-\gamma}-V^\pi(s'))} \big|, 
    \\
    W_2 & := \max_{s,a}\big|\sum_{s'\in \S}[P_{s,a}(s')-\widetilde{P}_{s,a}(s')]e^{-|\beta|V^\pi(s')} \big|\,, \qquad \hbox{with $V^\pi(s) =\max_a Q^\pi(s,a)$}.
\end{align*}
\end{lemma}

\subsection{Proof of Theorem \ref{thm:MBRSQVI_UB_Q}}

\begin{proof}
Let $\beta>0$ and $\varepsilon>0$. 
In view of the error decomposition in (\ref{equation:decomposition_Qvalue}), to establish $\varepsilon$-value-correctness it suffices to ensure $\| \widehat{Q}^{\pi^*}-Q^* \|\le \varepsilon/2$ and $\|Q_k-\widehat{Q}^*\|\le \varepsilon/2$. 
By Lemma \ref{lemma:QVI_epsilon_k}, we can have $\|Q_k - \widehat{Q}^* \|<\varepsilon/2$ using enough iterations of the optimization oracle. Further, by Lemma \ref{lemma:SimulationLemma}, if  
\begin{align}
\label{eq:max_sa_beta_+}
    \max_{s,a}\big|\sum_{s'\in \S}[P_{s,a}(s')-\widehat{P}_{s,a}(s')]e^{-|\beta|V^*(s')} \big|<\frac{\varepsilon(1-\gamma)|\beta|}{2\gamma}e^{-|\beta|/(1-\gamma)}=:\tau,
\end{align}
then $\|\widehat{Q}^{\pi^*}-Q^*\|\le \varepsilon/2$. To ensure this, we use the following lemma (proven in Appendix \ref{Appendix:UpperBounds}):

\begin{restatable}{lemma}{HoeffdingERM}
   \label{lem:Hoeffding_ERM}
        Let $\pi$ be any fixed policy and $\tau>0$. 
        If $N>\frac{1}{2\tau^2}\big(1-e^{-|\beta|/(1-\gamma)}\big)^2\log(2SA/\delta)$, then it holds that
        \begin{align*}
            (i)&\qquad \max_{s,a}\big |\sum_{s'}[P_{s,a}(s')-{\widehat{P}_{s,a}(s')}]e^{-\beta V^\pi(s')} \big|<\tau\,,\quad\hbox{with probability $\ge 1-\delta$,}\quad \hbox{$\beta>0$};\\
            (ii)&\qquad \max_{s,a}\big |\sum_{s'}[P_{s,a}(s')-{\widehat{P}_{s,a}(s')}]e^{-|\beta|( V^\pi(s')-\frac{1}{1-\gamma})} \big|<\tau\,
       \quad\hbox{with probability $\ge 1-\delta$,}\quad \hbox{$\beta<0$}.
        \end{align*}
\end{restatable}   
   
Applying Lemma \ref{lem:Hoeffding_ERM}, inequality (i), with $\pi=\pi^\star$, we observe that (\ref{eq:max_sa_beta_+}) holds with probability at least $1-\delta$ by picking $N \ge \frac{1}{2\tau^2}\big(1-e^{-|\beta|/(1-\gamma)}\big)^2\log(2SA/\delta)$. Noting that the total calls to the generative model is $T=SAN$ and substituting in the value for $\tau$, we can ensure for all $(s,a)$ that $Q_k(s,a)>Q^*(s,a)-\varepsilon$ with probability larger than $1-\delta$ by using a total number of samples 
    \begin{align*}
        T \ge \frac{2SA\gamma^2}{\varepsilon^2 (1-\gamma)^2}\bigg(\frac{e^{|\beta|/(1-\gamma)}-1}{|\beta|}\bigg)^2 \log\bigg(\frac{2SA}{\delta}\bigg)\,.
    \end{align*}
The case of $\beta<0$ is proven using very similar lines, but will use inequality (ii) in Lemma \ref{lem:Hoeffding_ERM}. 
\end{proof}

\subsubsection{Proof of Theorem \ref{thm:MBRSQVI_UB_policy}}

\begin{proof}
Let $\varepsilon>0$ and $\beta\neq 0$. 
In view of the error decomposition in (\ref{equation:decomposition_policy}), to establish $\varepsilon$-policy-correctness it suffices to require: (i) $\|\widehat{V}^{\pi_k}-\widehat{V}^*\|\le \varepsilon/3$, (ii) $\|\widehat{V}^{\pi^*}-V^* \|\le \varepsilon/3$, and (iii) $\|V^{\pi_k} - \widehat{V}^{\pi_k}\|\le \varepsilon/3$. 
For (i), we can have $\|\widehat{V}^{\pi_k}-\widehat{V}^*\|\le \varepsilon/3$ using enough iterations of the optimization oracle as a consequence of  Lemma \ref{lemma:QVI_epsilon_k}. To control (ii) and (iii), let us define the event 
$$
E:=\cap_{\pi\in \Pi}\big\{\|V^{\pi} - \widehat{V}^\pi\| \le \varepsilon/3\}.
$$
We can show that $\P(E)\ge 1-\delta$ for sufficiently large $N$. Indeed, for a given $\pi\in \Pi$, an application of Lemma \ref{lemma:SimulationLemma} and Lemma \ref{lem:Hoeffding_ERM}, identical to the treatment  in the proof of Theorem \ref{thm:MBRSQVI_UB_policy}, shows that $\|V^{\pi} - \widehat V^\pi\|\le \varepsilon/3$ with probability at least $1-\delta/|\Pi|$ if $N \ge \frac{1}{2\tau^2}\big(1-e^{-\beta/(1-\gamma)}\big)^2\log(2SA|\Pi|/\delta)$ with $\tau = \frac{\varepsilon(1-\gamma)|\beta|}{3\gamma}e^{-|\beta|/(1-\gamma)}$. Hence, 
\begin{align*}
    \P(E^c) = \P\big(\exists \pi \in \Pi: \|V^{\pi} - \widehat V^\pi\|>\varepsilon/3\big) 
    \le \sum_{\pi\in \Pi} \P\big(\|V^{\pi} - \widehat V^\pi\|>\varepsilon/3\big) \le \delta,
\end{align*}
so that $\P(E)\ge 1-\delta$. 
It is evident that conditioned on $E$, (ii) and (iii) holds. Hence, with probability greater than $1-\delta$, $\varepsilon$-policy-correctness is maintained using a total number of model calls of
\begin{align}
    \label{eq:T_1}
    T_{1,\delta}=\frac{9SA\gamma^2}{\varepsilon^2 (1-\gamma)^2}\bigg(\frac{e^{|\beta|/(1-\gamma)}-1}{|\beta|}\bigg)^2 \log\bigg(\frac{2SA|\Pi|}{\delta}\bigg)\,.
\end{align}

To establish the second bound, first observe that Theorem \ref{thm:MBRSQVI_UB_Q} implies that $\|\widehat{V}^{\pi_k} - V^*\|\le \varepsilon$ with probability exceeding $1-\delta$ if $T\ge \frac{2SA\gamma^2}{\varepsilon^2 (1-\gamma)^2|\beta|^2}\big(e^{|\beta|/(1-\gamma)}-1\big)^2 \log\big(\frac{2SA}{\delta}\big)$; this is verified by noting that
$$\|\widehat{V}^{\pi_k} - V^*\| = \max_s \big|\max_a \widehat Q^{\pi_k}(s,a) - \max_a Q^{*}(s,a)\big| \le \max_{s,a} \big|\widehat Q^{\pi_k}(s,a) - Q^{*}(s,a)\big|=\|Q_k - Q^*\| .$$

To proceed, we make use of the following lemma, which is proven in Appendix \ref{Appendix:UpperBounds}:

\begin{restatable}{lemma}{DeteriorationGreedyPolicy}
\label{lem:DeteriorationOfGreedyPolicy}
    Let $\alpha>0$. Let $\overline{V}\in \R^S$ be a value function obeying $\|V^*-\overline{V}\|<\alpha$, and  $\pi^G:=\argmax_a[R(s,a)+\gamma \rho_{s,a}(\overline{V}(s'))]$ be a greedy policy with respect to $\overline{V}$. Then, 
    $\|V^*-V^{\pi_G}\|\leq \frac{2\gamma}{1-\gamma}\alpha$\,.
\end{restatable}

Choosing $\overline V = \widehat{V}^{\pi_k}$, we have shown that $\|\widehat{V}^{\pi_k} - V^*\|\le \varepsilon$ with probability $1-\delta$. Further, note that $\pi_k$ by construction is the greedy policy with respect to $\widehat{V}^{\pi_k}$. Therefore, by Lemma \ref{lem:DeteriorationOfGreedyPolicy}, the true value of $\pi_k$ satisfies, with probability at least $1-\delta$,
 $$
 \|V^*-V^{\pi_k}\|\leq \frac{2\gamma}{1-\gamma}\varepsilon,
 $$
which yields the following bound: 
\begin{align}
    \label{eq:T_2}
    T_{2,\delta}=\frac{8SA\gamma^2}{\varepsilon^2 (1-\gamma)^4}\bigg(\frac{e^{|\beta|/(1-\gamma)}-1}{|\beta|}\bigg)^2 \log\bigg(\frac{2SA}{\delta}\bigg)\,.
\end{align}


\paragraph{Final Bound.}To derive the final bound, we put together (\ref{eq:T_1}) and (\ref{eq:T_2}), while suitably adjusting the error probabilities. Therefore,  
$\varepsilon$-policy-correctness is guaranteed with probability exceeding $1-\delta$ if $T\ge \min\{T_{1,\delta/2},T_{2,\delta/2}\}$.
\end{proof}



\section{Sample Complexity Lower Bounds}
\label{Section:LowerBounds}
In this section, we provide two sample complexity lower bounds. The first (Theorem~\ref{Thm:LowerBoundQ-function}) concerns value learning, whereas the second (Theorem~\ref{Thm:LowerBoundPolicy}) addresses policy learning. 

\medskip 

\begin{theorem}[{Lower bound for value learning}]
\label{Thm:LowerBoundQ-function}
There exist constants $c_1,c_2>0$ such that for any RL algorithm $\mathcal{U}$ that outputs a $Q$-value $Q^\mathcal{U}$, any $\delta\in (0,\frac{1}{4})$, and $\varepsilon\in \big(0,\frac{1}{40}\frac{\gamma}{|\beta|}(1-e^{-|\beta|/(1-\gamma)})\big)$, the following holds: if the total number $T$ of samples satisfies  
\begin{align*}
    T \leq \frac{SA\gamma^2}{c_1 \varepsilon^2}\frac{(e^{|\beta|/(1-\gamma)}-3)}{|\beta|^2}\log\bigg(\frac{SA}{c_2\delta } \bigg)\, ,
\end{align*}
then there exists some MDP $M$ with $S$ states and $A$ actions for which $\P(\|Q^*_M-Q_T^{\mathcal{U}}\|>\varepsilon)\geq \delta$.
\end{theorem}
\vspace{2mm}

\begin{theorem}[{Lower bound for policy learning}]
\label{Thm:LowerBoundPolicy}
There exist constants $c_1,c_2>0$ such that
    for any RL algorithm $\mathcal{U}$ that outputs a policy $\pi^{\mathcal U}_T$, any $\delta\in (0,\frac{1}{4})$, and $\varepsilon\in \big(0,\frac{1}{50}\frac{\gamma}{|\beta|}(1-e^{-|\beta|/(1-\gamma)})\big)$, it holds that if the total number $T$ of samples satisfies 
\begin{align*}
    T \leq \frac{SA\gamma^2}{c_1 \varepsilon^2}\frac{(e^{|\beta|/(1-\gamma)}-3)}{|\beta|^2}\log\bigg(\frac{S}{c_2\delta } \bigg),
\end{align*}
then there exists some MDP $M$ with $S$ states and $A$ actions for which $\P(\|V^*_M-V^{\pi_T^\mathcal{U}}\|>\varepsilon)\geq \delta$.
\end{theorem}

\noindent
While analogous policy learning lower bounds are often stated in the risk-neutral literature, explicit proofs are typically omitted, to the best of our knowledge. For completeness, we provide a detailed, step-by-step derivation, emphasizing its close connection to the corresponding value-learning lower bound as well as the subtle differences that arise in the final guarantee.

Theorems~\ref{Thm:LowerBoundQ-function}--\ref{Thm:LowerBoundPolicy} establish that an exponential dependence on the effective horizon \(1/(1-\gamma)\) in the sample complexity is unavoidable under both value and policy learning. These bounds cover both risk-averse (\(\beta>0\)) and risk-seeking (\(\beta<0\)) agents, providing strong impossibility results for recursive ERM. Comparing with the sample complexity bounds of MB-\RSQVI\ (Theorems~\ref{thm:MBRSQVI_UB_Q}--\ref{thm:MBRSQVI_UB_policy}), we observe a similar exponential dependence; however, a gap of order \(\frac{1}{(1-\gamma)^2}e^{|\beta|/(1-\gamma)}\) remains. Closing this gap may require either a refined analysis of MB-\RSQVI\ or more sophisticated algorithmic ideas. Nevertheless, these lower bounds confirm that risk-sensitive RL under ERM is fundamentally harder in the worst case than the risk-neutral setting, where minimax sample complexity scales polynomially with \(1/(1-\gamma)\).

{\tiny
\begin{figure}[!h]
\label{fig:Lowerbound_maintext}
\begin{center}
\begin{tikzpicture}[node distance={20mm}, thick, main/.style = {draw, circle}]
\node[main] (1) {$z_i$}; 
\node[main] (2) [above left of=1] {$s^G$};
\node[main] (3) [above right of=1] {$s^B$};
\draw[->] (2) to [out=180,in=270,looseness=5] (2);
\draw[->] (3) to [out=0,in=270,looseness=5] (3);
\draw[->] (1) -- node[midway, above right, sloped, pos=0.7] {$q_i$} (2);
\draw[->] (1) -- node[midway, above left, sloped, pos=0.9] {$1-q_i$} (3);
\draw[->] (1) -- node[midway, above right, pos=2] {$R=1$} (2);
\draw[->] (1) -- node[midway, above left,  pos=2] {$R=0$} (3);
\end{tikzpicture} 
\end{center}
\caption{Hard-to-learn MDP construction}
\end{figure}
}%

\paragraph{Proof sketch.} The proofs are provided in Appendix~\ref{Section:AppendixLowerBounds}; here we sketch the main ideas for value learning (Theorem~\ref{Thm:LowerBoundQ-function}). The construction involves a class of hard-to-learn MDPs (Figure~1) with two absorbing states \(s^G\) and \(s^B\), yielding rewards \(R=1\) and \(R=0\) under any action, respectively. All other state-action pairs \(z\) give zero reward and transition only to \(s^G\) or \(s^B\) with probability \(P(s^G|z) = q\) and \(P(s^B|z) = 1-q\),  for some $q>0$.  This construction critically allows us to calculate explicitly $Q^*(z)$ for a given parameter $q$ and for two different MDPs $M_0,M_1$ in the class, where $q_0 = p$ and $q_1 = p+\alpha$ for appropriately chosen values of $p$ and $\alpha$. It is key to choose them in a way to ensure that $Q^*_{M_1}(z)-Q^*_{M_0}(z)>2\varepsilon$, which means that any specific algorithmic output $Q^{\mathcal{U}}(z)$ cannot be $\varepsilon$-close to both $Q^*_{M_1}(z)$ and $Q^*_{M_0}(z)$. We then show by a likelihood ratio argument that any algorithm $\mathcal{U}$ that is $(\varepsilon,\delta)$-correct on $M_0$, i.e.~that $\P_0(|Q_{M_0}^*(z)-Q^{\mathcal{U}}(z)|\leq\varepsilon)>\delta$, will also satisfy that $\P_1(|Q_{M_0}^*(z)-Q^{\mathcal{U}}(z)|\leq \varepsilon)>\delta$ provided that the algorithm does not try out $z$ enough times on $M_0$ and exactly because $Q^*_{M_1}(z)-Q^*_{M_0}(z)>2\varepsilon$, the event $\{|Q_{M_0}^*(z)-Q^{\mathcal{U}}(z)|\leq\varepsilon\}$ is disjoint from the event on being $\varepsilon$-close to $Q_{M_1}^*$. 
The final part of the proof is to exploit that the different state-action pairs contain no information about each other, which allows for an independence argument for the estimation of $Q^{\mathcal{U}}(z)$ and $Q^{\mathcal{U}}(z')$ for $z\neq z'$.

We note that, in the course of this analysis, we also correct a minor issue in Lemma 17 of \cite{gheshlaghi2013minimax}. Specifically, the issue arises in the derivation of a lower bound on the likelihood ratio between two Bernoulli distributions with means $p\geq\frac{1}{2}$ and $p+\alpha$ on a high probability event, for $p>\frac{1}{2}$. Additionally, we  establish a corresponding bound for $p<\frac{1}{2}$, which may be of independent interest. For policy learning, a similar construction is used, augmented with a known action $a_0$ to facilitate the analysis.

{\color{black}
\paragraph{Algorithmic intuition.}
The above construction also provides insight into how a model-based algorithm such as MB-\RSQVI\ behaves on these instances. Since all rewards are zero except in the absorbing states, learning reduces to estimating the transition probability $q = P(s^G | z)$ for each state-action pair $z$. The algorithm therefore forms an empirical estimate $\hat{q}$ and computes $Q$-values based on this estimate. On the hard instances $M_0$ and $M_1$, the true probabilities differ only by a small amount ($q_0 = p$ vs.\ $q_1 = p+\alpha$). When the number of samples is limited, the empirical estimate $\widehat{q}$ will typically not be accurate enough to reliably distinguish between these two cases. As a result, the algorithm may construct a model that is statistically consistent with both $M_0$ and $M_1$, leading to $Q$-value estimates that are necessarily inaccurate for at least one of them.}


\begin{remark}
The best lower bound in the risk-neutral setting is derived in \cite{gheshlaghi2013minimax} using a richer construction than above. 
However, with a risk-sensitive learning objective, the optimal Q-value function in that  construction  does not admit an analytical solution, which  is crucial for tuning transition probabilities and deriving our bounds.
\end{remark}
\begin{remark}
We note that the bound becomes vacuous for $|\beta|\leq(1-\gamma)\log(3)$. This is partly due to a final approximation introduced to make the bound more interpretable; importantly, this approximation does not affect the dependence on $|\beta|$ for large $|\beta|$. Even without this  approximation, the bound still becomes vacuous as $\beta\rightarrow 0$. This behavior arises because, in this limit, $p\to 1$ or $p\to 0$ depending on the direction of the limit. Since our information-theoretic argument yields a sample complexity proportional to $p(1-p)$, the bound vanishes in this regime.  
\end{remark}

{\color{black}
\section{Numerical experiments}\label{Section:Experiments}
We conduct a numerical experiments to showcase the empirical performance of MB-\RSQVI. We consider a RiverSwim MDP \cite{strehl_analysis_2008} (Figure \ref{fig:experiment}\subref{fig:RiverSwim}) with $8$ states with discount factor $\gamma=0.95$. In this MDP, there two actions in each state, corresponding to moving `left' or `right'. The rewards are zero, except in two places: a reward of $0.05$ (low reward) under `left' in the left-most state ($s=1$), and reward of $1$ (high reward) under `right' in the right-most state ($s=L$). The low reward is easy to access because of the actions with deterministic transitions. The agent has a risk-sensitive objective defined using recursive ERM. To showcase the performance of MB-\RSQVI, we consider three different values of $\beta\in\{0,1,1.25\}$; we recall that $\beta=0$ corresponds to the risk-neutral case. 
For each value of $\beta$, we generated datasets of increasing sizes $T\in\{160,320,\ldots,1600\}$ (multiples of $SA=16$) by sampling each state-action pair in the MDP.  
The MB-\RSQVI\ algorithm was run on each dataset to produce an output policy $\hat \pi$. This procedure was run for $1000$ independent runs. 

Figure \ref{fig:experiment} depicts $\|V^*-V^{\hat \pi}\|$ averaged over the $1000$ runs for the three values of $\beta$. The true value of $\hat \pi$ is computed using a value iteration procedure based on (\ref{eq:Bellman-value-policy}). In this figure, one may consider a particular value of $\varepsilon$ to observe the number of samples needs to output an $\varepsilon$-optimal policy. It is evident that this number of samples increases as $\beta$ increases. Furthermore, it is evident that even a small increase in $\beta$ (from $1$ to $1.25$) leads to a large increase in the number of samples to learn an $\varepsilon$-optimal policy. We also note that when the output policy $\widehat\pi$ matches the optimal policy for all the runs, the curve would hit the horizontal axis. 
These results showcases that learning a near-optimal policy under ERM could be substantially more demanding in terms of data even in such a rather simple MDP, and the data efficiency may be severely impacted as the risk parameter $\beta$ increases.  
}

\begin{figure}[h]
    \centering
\begin{subfigure}{0.48\textwidth}
      \includegraphics[width=\linewidth]{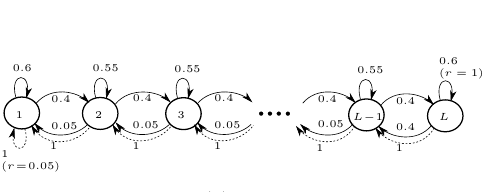}
      \caption{}
      \label{fig:RiverSwim}
\end{subfigure}
\begin{subfigure}{0.48\textwidth}
      \includegraphics[width=.95\linewidth]{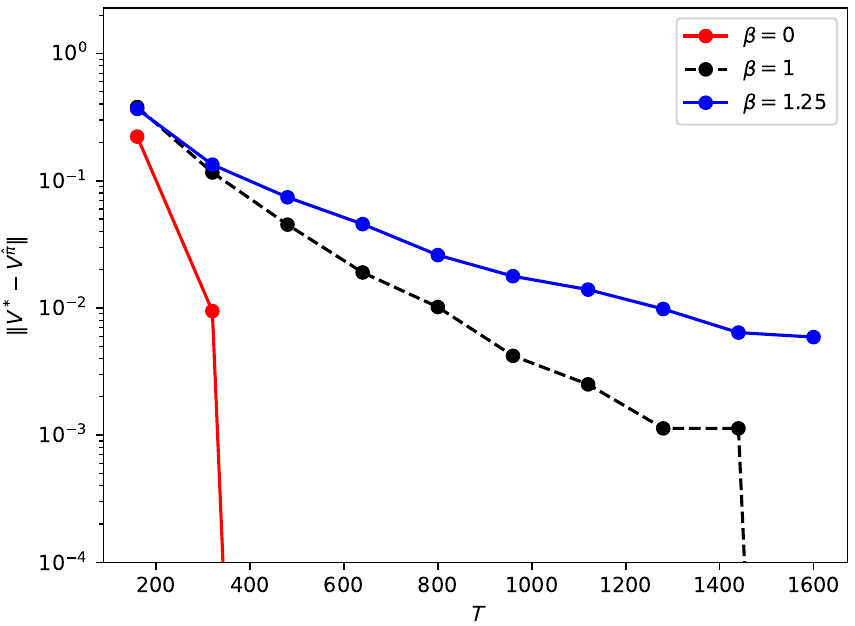}
      \caption{}
      \label{fig:policy-error}
\end{subfigure}
\caption{(a) The $L$-state RiverSwim MDP \cite{strehl_analysis_2008}; (b) Policy learning error $\|V^*-V^{\hat \pi}\|$ under MB-\RSQVI.}
    \label{fig:experiment}
\end{figure}

\section{Concluding Remarks}\label{Section:Conclusions}

We studied the sample complexities of value learning and policy learning in finite discounted MDPs, where the agent exhibits recursive risk preferences modeled via the ERM and has access to a generative model. {\color{black}The generative model setting is commonly used in theoretical RL as it provides a clean framework in which the statistical difficulty of the problem can be isolated and precisely characterized. In particular, it removes effects due to exploration and data collection, allowing us to focus on the intrinsic difficulty induced by the risk-sensitive objective.} 
We introduced a model-based algorithm, MB-\RSQVI, and derived PAC-type bounds on its sample complexity. In addition, we established sample complexity lower bounds for both policy and value learning. These lower bounds reveal that an exponential dependence on the horizon $1/(1-\gamma)$ is unavoidable in the worst case, demonstrating that this setting is fundamentally harder than the risk-neutral case. To our knowledge, these constitute the first upper and lower bounds for this setting. The bounds are tight in $S$, $A$, $\delta$, and $\varepsilon$, but gaps remain in the dependence on $1/(1-\gamma)$. 

Closing these remaining gaps {\color{black}leads to interesting directions} for future work. {\color{black} Firstly, our lower bound is valid for $|\beta|>(1-\gamma)\log(3)$, and it would therefore be interesting to derive a lower bound that would uncover the risk-neutral lower bound as $\beta$ tends to zero.} 
It is not immediately clear whether this requires improving the upper bound, the lower bound, or both. Designing hard MDP instances for lower bounds is subtle: they should be difficult to learn, yet still tractable enough to compute value functions; this is a challenge amplified in the risk-sensitive setting. The plug-in estimator used in MB-\RSQVI\ is provably optimal in the risk-neutral case, raising the natural question of whether this optimality extends to ERM. 
Another promising direction is the development of model-free algorithms for this setting and the analysis of their statistical efficiency. 

It would also be valuable to extend the study to MDPs with function approximation (e.g., \cite{zhou2021provably}), as well as more complex RL settings, including offline RL \cite{rashidinejad2021bridging}, where data is collected under a fixed (but unknown) behavior policy, and online RL \cite{strehl_analysis_2008,lattimore_near-optimal_2014}, where the agent's learned policy directly impacts the data collection process. 
A representative next step is offline RL, where, under a model-based approach, plug-in estimators combined with pessimism-style corrections are often used (e.g., \cite{li2024breaking}). Some technical tools developed in this paper 
may be of independent interest and could potentially be useful in analyzing related offline RL settings.

\section*{Acknowledgments}
The authors would like to acknowledge the support from Independent Research Fund Denmark, grant number 1026-00397B.


\printbibliography

\appendix
\thispagestyle{empty}



\section{Risk Measures}
\label{Section:Appendix:RiskMeasures}

In this section, we give a very brief introduction to risk measures. In the actuarial and mathematical finance literature, working with both losses and reward is common. A good introduction is \cite{follmer2010convex}, which like us uses the rewards formulation. Due to this ambiguity in the literature, we here collect some  precise definitions for the reward setting, and then list some of the most important risk measures.  

To begin with, let $(\Omega, \mathcal{F},\mathbb{P})$ be a background probability space, and $\mathcal{M}$ some convex cone of random variables defined on the background space. That is, for any $X,Y\in \mathcal{M}$ and $\lambda>0$, it holds that $X+Y\in \mathcal{M}$ and $\lambda X\in \mathcal{M}$. 

\begin{definition}[{Risk measure}]
A functional $\psi:\mathcal{M \rightarrow \R}$ is said to be a \emph{risk measure} if it satisfies the following properties:
    \begin{align*}
        &\psi(0) = 0,\qquad &\text{(Normalization)}
        \\
        &\text{if }X\leq Y \text{ then } \psi(X)\geq \psi(Y), \qquad &\text{(Monotonicity)}
        \\
        & \psi(X+c) = \psi(X)-c, \quad \forall c\in \R. \qquad &\text{(Translation invariance)}
    \end{align*}
    If, in addition, $\psi$ satisfies the properties 
     \begin{align*}
        &\psi(cX) = c\psi(X), \quad \forall c>0,\qquad &\text{(Positive homogeneity)}
        \\
        &\psi(X+Y)\leq \psi(X)+\psi(Y),\qquad &\text{(Sub-additivity)}
    \end{align*}   
    it is called a \emph{coherent risk measure}. A weaker notion is \emph{convex risk measure}, which is one obeying 
         \begin{align*}
        &\psi(\lambda X + (1-\lambda)Y) \leq \lambda \psi(X)+(1-\lambda)\psi(Y), \quad \forall \lambda\in[0,1]\,.\qquad &\text{(Convexity)}
    \end{align*} 
    Finally, a risk-measure $\psi$ is called \emph{law-invariant} if $\psi(X)$ only depends on the distribution of $X$ under $\P$. 
 \end{definition}
    We now mention some examples of risk measures.
    
    \paragraph{Entropic Risk Measure (ERM).}
    The risk measure given by
    \begin{align*}
        \text{ERM}_\beta(X) = \frac{1}{\beta}\log\big(\E[e^{-\beta X}] \big)
    \end{align*}
    is known as the entropic risk measure (ERM) with parameter $\beta \neq0$. Notably, ERM is not coherent (see, e.g., \cite{follmer2010convex}) as it is does not satisfy the positive homogeneity property. Letting $\beta\rightarrow 0$ one recovers the expectation $\E[X]$, and letting $\beta \rightarrow \infty$ yields the essential infimum risk measure. 

    \paragraph{Value-at-Risk (VaR).}~The risk measure given by
    \begin{align*}
        \text{VaR}_\alpha(X) := q_\alpha(X) := \inf \{ x\in \R:F_X(x)\geq\alpha\}
    \end{align*}
    is called the Value-at-Risk (VaR) at level $\alpha\in(0,1)$. VaR is in general not sub-additive, and hence also not coherent. 

     \paragraph{Conditional Value-at-Risk (CVaR).}~The risk measure given by 
    \begin{align*}
        \text{CVaR}_\alpha(X):= \frac{1}{\alpha}\int_0^\alpha\text{VaR}_u(X)du
    \end{align*}
    is known as the Conditional Value-at-Risk (CVaR), or sometimes as the expected shortfall (ES). It is known to be a coherent risk-measure. 

    All the examples so far are evidently law-invariant.

    It is worth highlighting that the actual functional $\rho$ used to rank random variables is the \emph{negative} of the ERM-risk measure $\text{ERM}_\beta(X)$, introduced above, with the interpretation being that a lower quantity of risk is preferable. More formally, we consider the functional $\rho:\mathcal{M}\rightarrow \R$ given by $\rho(X):=-\text{ERM}_\beta(X)$, featuring the following properties:
    \begin{align*}
        &\rho(0) = 0,\qquad &\text{(Normalization)}
        \\
        &\text{if }X\leq Y \text{ then } \rho(X)\leq \rho(Y), \qquad &\text{(Monotonicity)}
        \\
        & \rho(X+c) = \rho(X)+c. \qquad &\text{(Translation invariance)}
    \end{align*}
    It is common in the literature to overload notation and also refer to $\rho$ as the ERM and we will do so and henceforth we will no longer work directly with risk measures, but only with this specific functional $\rho$.
It follows immediately from the normalization and translation invariance that $\rho(c) = c$ for any $c\in \R$.

We will often use the short-hand notation $\rho_{s,a}(V(s'))$ as $\rho$ applied to the random variable $X$ that takes on the values $\{V(s')\}_{s'\in S}$ with probabilities $\P(X=V(s')) = P(s'|s,a)$.

\section{Bellman Optimality and Bellman Recursions}
\label{Section:BellmanOptimality}
In this section, we properly define the state-value functions and state-action value functions of any possibly history-dependent policy $\pi$ and show that the problem of finding an optimal policy can be achieved by a stationary policy and that the value functions satisfy Bellman recursions when the value functions are defined iteratively with respect to the ERM. Several similar results exist in the literature, e.g., \cite{asienkiewicz2017note} and \cite{bauerle_markov_2024} that also cover the case of $\beta>0$. These results are derived under more general assumptions on $\S$ and $\A$. These general assumptions are trivially satisfied when $\S$ and $\A$ are finite but their proofs require assumptions on the functionals to ensure the existence of a stationary optimal policy usually by invoking a measurable selection theorem. We avoid this complication by only considering finite $\S$ and $\A$ and we in turn also give the first proof for state-action value functions and not just for value-functions, which is needed as we consider the problem of learning. 
 
Let $M = (\S,\A,P,R,\gamma,\rho)$ be a finite MDP with $\rho$ being the ERM, and $R(s,a)\in[0,1]$ a deterministic reward function. Let $D = \S\times \A$, $H_1=\S$ and $H_k=D^{k-1}\times \S$ for $k\geq 2$ the set of all possible histories up to stage $k$. 
A policy $\pi = (\pi_k)_{k\in \NN}$ is a sequence of maps $\pi_k:H_k\rightarrow \A$. We denote the set of all policies by $\Pi$, and identify the set of all stationary policies with the set of measurable maps $F$ from $\S$ to $\A$, which is simply the set of all maps from $\S$ to $\A$ since all maps between finite sets that are both equipped with the discrete topology are measurable with respect to the induced Borel $\sigma$-algebras. 
Let $B(H_k)$ be the set of maps $V_k:H_k\rightarrow \R$ equipped with the supremum norm, and let $\pi = (\pi_k)_{k\in \NN}$ be any policy. For any $V_{k+1}\in B(H_{k+1})$ and $h_k\in H_k$, we denote by $\rho_{h_k,\pi_k}(V_{k+1})$ the functional $\rho$ applied to the random variable concentrated on the set $\{V_{k+1}(h_k,\pi_k(h_k),s')\}_{s'\in S}$ with $\P(s_{k+1} = s') = \P(s'|s_k,\pi_k(h_k))$. By monotonicity of $\rho$, we get that $\rho_{h_k,\pi_k}(V_{k+1})\leq \|V_{k+1}\|$.
 
Next, we define the operators $L_{\pi_k}:B(H_{k+1})\rightarrow B(H_k)$ by 
\begin{align*}
    (L_{\pi_k}V_{k+1})(h_k) = L_{\pi_k,V_{k+1}}(h_k):=R(s_k,\pi_k(h_k))+\gamma \rho_{h_k,\pi_k}(V_{k+1}).
\end{align*}
Similarly, we define $L_a:B(H_{k+1})\rightarrow B(H_k)$ by 
\begin{align*}
    (L_aV_{k+1})(h_k)=L_{a,V_{k+1}}(h_k):=R(s_k,a)+\gamma \rho_{s_k,a}(V_{k+1}),
\end{align*}
with $\rho_{s_k,a}$ defined analogously as $\rho_{h_k,\pi_k}$ as above. By the basic properties of risk-measures, it follows directly that $0\leq L_{\pi_kV_{k+1}}(h_k)\leq 1+\gamma\|V_{k+1}\|$ and similarly for $L_a$. 
 
For any initial state $s_0 = s$, we define the $N$-step discounted utility as 
\begin{align*}
    J_N(s,a,\pi):=(L_a\circ L_{\pi_2}\circ\dots \circ L_{\pi_N})\boldsymbol{0}(s)
\end{align*}
where $\boldsymbol{0}(h_k)=0$ for all $h_k\in H_k$ and all $k\in \NN$.  

By monotonicity of $\rho$, it holds that the sequence $(J_N(s,a,\pi))_{N\in\NN}$ is non-decreasing and bounded in the interval $[0,\frac{1}{1-\gamma}]$ for any $s,a,\pi\in \S\times \A\times \Pi$, and so the limit 
\begin{align*}
    J(s,a,\pi):=\lim_{N\rightarrow\infty}J_N(s,a,\pi)
\end{align*}
exists for any state $s$, any action $a$,  and any policy $\pi$.

The agent wishes to find $J^*(s,a)=\sup_{\pi\in \Pi}J(s,a,\pi)$ and an optimal policy $\pi^*$ attaining $J^*(s,a)$, namely, $J(s,a,\pi^*)=J^*(s,a)$. 

\begin{theorem}
    There exist a unique non-negative function $Q\in B(\S\times \A)$ (non-negative map from $\S\times \A\rightarrow \infty$ equipped with sup-norm) and a stationary decision rule $f^*:\S\rightarrow \A$ such that 
    \begin{align*}
        Q(s,a) & =R(s,a)+\gamma \rho_{s,a}(\max_{a'}Q(s',a'))\,,
        \\
        & = R(s,a) +\gamma \rho_{s,a}(Q(s',f^*(s')))\,.
    \end{align*}
    Moreover, $Q(s,a) = J^*(s,a) = J(s,a,f^*)$ meaning that $f^*$ is an optimal stationary policy. 
\end{theorem}
\medskip

\begin{proof}
    We start by proving existence of $Q$. Let $L:B(\S\times \A)\rightarrow B(\S\times \A)$ denote the operator given by
    \begin{align*}
        LQ(s,a):=R(s,a)+\gamma \rho_{s,a}(\max_{a'}Q(s',a'))\,.
    \end{align*}
    Let $Q_1,Q_2\in B(\S\times \A)$. We then have for all $(s,a)$ that 
    \begin{align*}
        LQ_1(s,a)&-LQ_2(s,a)  =\gamma[\rho_{s,a}(\max_{a'}Q_1(s',a'))-\rho_{s,a}(\max_{a'}Q_2(s',a'))]
        \\
        & = \gamma[\rho_{s,a}(\max_{a'}Q_1(s',a'))-\rho_{s,a}(\max_{a'}Q_1(s',a')-\max_{a'}Q_1(s',a')+\max_{a'}Q_2(s',a'))]
    \\
    & \leq \gamma[\rho_{s,a}(\max_{a'}Q_1(s',a'))-\rho_{s,a}(\max_{a'}Q_1(s',a')+\max_{a'}\{Q_2(s',a')-Q_1(s',a'))\})]
    \\
    & \leq \gamma[\rho_{s,a}(\max_{a'}Q_1(s',a'))-\rho_{s,a}(\max_{a'}Q_1(s',a')+\|Q_1-Q_2\|)]
    \\
& = \gamma \|Q_1-Q_2\|\,.
        \end{align*}
        We start by showing that $L:B(\S \times \A)\rightarrow B(\S\times \A)$, that is, it takes non-negative functions and returns non-negative functions. By normalization and monotonicity of $\rho$, we have for any non-negative $Q\in B(\S\times \A)$ that 
        \begin{align*}
            LQ(s,a) = R(s,a)+\gamma \rho_{s,a}(\max_{a'}Q(s,a)) \geq 0+\gamma\rho_{s,a}(0) = 0.
        \end{align*}
        By a completely similar argument, we have $LQ_2(s,a)-LQ_1(s,a)\leq \gamma \|Q_1-Q_2\|$, so that $L$ is a contraction, and since for $\S\times \A$ we can identify $B(\S\times \A)$ with the closed subset of the complete metric space$(\R^{S\times A},\|\cdot \|)$ that consists of vectors with non-negative coordinates. Since this subspace is closed, it is also a complete metric space and the existence of $Q$ then follows from the Banach fixed-point theorem. 
        
        Since there are only finitely many states and actions, we can pick a stationary decision rule where $f^*(s)$ is an arbitrary element of $\argmax_{a}Q(s,a)$.
   
        Let $V$ be the function given by $V(s):=Q(s,f^*(s))$ for all $s$. We then observe that 
        \begin{align*}
            V(s) & \geq R(s,a)+\gamma \rho_{s,a}(V(s'))
        \end{align*}
        for every $s\in \S$. Let $\pi = (\pi_k)_k\in \NN{}$ be any policy in $\Pi$. The above inequality then shows that for any history $h_k, k\in \NN$ we have that $V(s_k)\geq L_{\pi_k}V(h_k)$ and furthermore we note that $Q(s_1,a)=L_aV(h_1)$. This implies for any $N\in \NN$ that 
        \begin{align*}
            Q(s,a)\geq (L_a\circ L_{\pi_2}\circ \cdots \circ L_{\pi_N})V(s) \geq (L_a\circ L_{\pi_2}\circ \cdots \circ L_{\pi_N})\boldsymbol{0}(s) = J_N(s,a,\pi)\,,
        \end{align*}
        where we have used that $Q(s,a)\geq 0$. Finally taking the limit we find that $Q(s,a)\geq J(s,a,\pi)$.
      
        Finally, we aim to show that $Q(s,a)\leq J(s,a,f^*)$. By induction, we wish to show that $V(s)\leq J_N(s,f^*(s),f^*)+\gamma^N\|V\|$ for all $N\in \NN$.          
        For the induction step, we start by noting that $J_1(s,f^*(s),f^*)=R(s,f^*(s))$ and so 
        \begin{align*}
            V(s) & = R(s,f^*(s))+\gamma \rho_{s,f^*(s)}(V(s'))
            \\
            & \leq R(s,f^*(s))+\gamma \rho_{s,f^*(s)}(\|V\|)
            \\
            & = R(s,f^*(s))+\gamma \|V\|
            \\
            & = J_1(s,f^*(s),f^*)+\gamma\|V\|\,,
        \end{align*}
        for all $(s,a)\in \S\times \A$. For the induction step, we assume that $V(s)\leq J_N(s,f^*(s),f^*)+\gamma^N\|V\|$. By using that $V(s) = L_{f^*} V(s)$ and that $L$ is monotone, we see that 
        \begin{align*}
            V(s) & = L_{f^*}V(s)
            \\
            & \leq L_{f^*}(J_N(s,f^*(s),f^*)+\gamma^N\|V\|)
            \\
            & = \big(R(\cdot,f^*(\cdot))+\gamma \rho_{\cdot,f^*(\cdot)}(J_N(\cdot,f^*(\cdot),f^*)+\gamma^N\|V\|    \big)(s)
            \\
            & = J_{N+1}(s,f^*(s),f^*)+\gamma^{N+1}\|V\|\,,
        \end{align*}
        from which taking the limit as $N\rightarrow \infty$, we get that $V(s)\leq J(s,f^*(s),f^*)$. 
      
        Finally, since
        \begin{align*}
            Q(s,a) = L_aV(s) \leq L_aJ(s,f^*(s),f^*) = J(s,a,f^*)\,,
        \end{align*}
        the conclusion holds. 
 
        Since this shows that an optimal stationary policy exists, it will suffice to consider only stationary policies and one can by completely analogous arguments show that for any stationary policy $\pi$, there exists a non-negative map $Q^\pi \in B(\S\times \A)$ such that $Q^\pi(s,a) = J(s,a,\pi)$, that is, $Q^\pi$ satisfies the Bellman recursion:
        \begin{align*}
            Q^\pi(s,a) = R(s,a)+\gamma \rho_{s,a}(Q^\pi(s',\pi(s'))\,,
        \end{align*}
        and similarly for state-value functions $V^\pi(s):=Q^\pi(s,\pi(s))$. 
\end{proof}
We also remark that in the proof, we see directly that $Q(s,a)\in [0,\frac{1}{1-\gamma}]$ for all $(s,a)$.

\section{Technical Lemmas}
\label{Section:Appendix:TechnicalLemmas}
Recall that $Q_k$ is the Q-function output by the algorithm after $k$ iterations, $\pi_k$ is the greedy policy with respect to $Q_k$, and that $\pi^*$ is an optimal policy of the true MDP $M$. 

The first lemma establishes a decomposition result for MB-\RSQVI, whose proof follows very similar lines to the proof of Lemma 3 in \cite{agarwal_model-based_2020}.

\begin{lemma}
\label{lemma:decomposition}
    For any state-action pair $(s,a)\in \S\times \cA$, 
    \begin{align*}
    Q_k(s,a) \geq Q^*(s,a) - \|Q_k-\widehat{Q}^*\| - \|\widehat{Q}^{\pi^*}-Q^* \|\, .
    \end{align*}
    Further, for any state $s\in \cS$,  
    \begin{align*}
     V^{\pi_k}(s) \geq Q^*(s)-\|V^{\pi_k}-\widehat{V}^{\pi_k}\|-\|\widehat{V}^{\pi_k}-\widehat{V}^*\|-\|\widehat{V}^{\pi^*}-V^*\| \, . 
    \end{align*}
\end{lemma}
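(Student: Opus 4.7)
The plan is to establish both inequalities via telescoping decompositions that insert the empirical-MDP counterparts of the relevant value functions, and then drop the term involving the empirical optimum by a one-sided optimality comparison, bounding all remaining differences by their max-norms. The structure mimics Lemma~3 of \cite{agarwal_model-based_2020}, but the arguments used (comparisons of $Q$- and $V$-functions in a fixed MDP and the pointwise dominance $\widehat{Q}^{\pi^*}\le\widehat{Q}^*$) depend only on the optimality equations from Section~\ref{Section:Background}, not on the specific form of the Bellman operator, so linearity of the expectation is not needed.

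For the first bound I would write the identity
\begin{align*}
Q^*(s,a) - Q_k(s,a) &= \bigl[Q^*(s,a) - \widehat{Q}^{\pi^*}(s,a)\bigr] + \bigl[\widehat{Q}^{\pi^*}(s,a) - \widehat{Q}^*(s,a)\bigr] + \bigl[\widehat{Q}^*(s,a) - Q_k(s,a)\bigr].
\end{align*}
Since $\widehat{Q}^*$ is the optimal $Q$-function in the empirical MDP $\widehat{M}$ while $\pi^*$ is merely one fixed stationary policy evaluated in $\widehat{M}$, the middle bracket satisfies $\widehat{Q}^{\pi^*}(s,a)-\widehat{Q}^*(s,a)\le 0$ pointwise and may be dropped. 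Bounding the two remaining brackets by $\|Q^*-\widehat{Q}^{\pi^*}\|$ and $\|\widehat{Q}^*-Q_k\|$ and rearranging yields the claim.

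For the second bound I telescope through the three natural intermediate quantities:
\begin{align*}
V^*(s) - V^{\pi_k}(s) &= \bigl[V^*(s) - \widehat{V}^{\pi^*}(s)\bigr] + \bigl[\widehat{V}^{\pi^*}(s) - \widehat{V}^*(s)\bigr] \\
&\quad + \bigl[\widehat{V}^*(s) - \widehat{V}^{\pi_k}(s)\bigr] + \bigl[\widehat{V}^{\pi_k}(s) - V^{\pi_k}(s)\bigr].
\end{align*}
Again $\widehat{V}^{\pi^*}(s)\le\widehat{V}^*(s)$ by optimality of $\widehat{V}^*$ in $\widehat{M}$, so the second bracket is non-positive and can be discarded. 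The remaining three brackets are bounded respectively by $\|V^*-\widehat{V}^{\pi^*}\|$, $\|\widehat{V}^*-\widehat{V}^{\pi_k}\|$, and $\|\widehat{V}^{\pi_k}-V^{\pi_k}\|$, which after rearrangement is exactly the claimed lower bound on $V^{\pi_k}(s)$ (reading the statement's $Q^*(s)$ as the typo for $V^*(s)$).

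I do not anticipate a serious obstacle: the only content is the one-sided optimality comparison $\widehat{Q}^{\pi^*}\le\widehat{Q}^*$ (respectively $\widehat{V}^{\pi^*}\le\widehat{V}^*$), which follows from the optimality equations stated in Section~\ref{Section:Background} applied to the empirical MDP $\widehat{M}$. The only bookkeeping care needed is to keep straight that $\widehat{Q}^{\pi^*}, \widehat{V}^{\pi^*}, \widehat{V}^{\pi_k}$ are evaluations in $\widehat{M}$ while $V^{\pi_k}, Q^*, V^*$ are evaluations in the true MDP $M$, so that each difference that survives the telescoping is exactly one of the four max-norm quantities appearing on the right-hand side.
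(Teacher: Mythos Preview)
Your proposal is correct and matches the paper's proof essentially line for line: both telescope through $\widehat{Q}^*$ (respectively $\widehat{V}^*$), drop the nonpositive term via the optimality comparison $\widehat{Q}^{\pi^*}\le\widehat{Q}^*$ (respectively $\widehat{V}^{\pi^*}\le\widehat{V}^*$) in the empirical MDP, and bound the remaining differences by their max-norms. The only cosmetic difference is that the paper writes the decomposition starting from $Q_k-Q^*$ and $V^{\pi_k}-V^*$ rather than the reverse.
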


\begin{proof}
    For any $(s,a)\in \cS\times \cA$, we have
    \begin{align*}
        Q_k(s,a)-Q^*(s,a) & = Q_k(s,a) - \widehat{Q}^*(s,a)+\widehat{Q}^*(s,a)-Q^*(s,a)
        \\
        & \geq Q_k(s,a) - \widehat{Q}^*(s,a) + \widehat{Q}^{\pi^*}(s,a)-Q^*(s,a)
        \\
        & \geq -\|Q_k-\widehat{Q}^*\| - \|\widehat{Q}^{\pi^*}-Q^*\|\, .
    \end{align*}
    Similarly, for any $s\in \cS$, we have
    \begin{align*}
        V^{\pi_k}(s)-V^*(s) & = V^{\pi_k}(s)-\widehat{V}^{\pi_k}(s)+\widehat{V}^{\pi_k}(s)-\widehat{V}^*(s)+\widehat{V}^*(s)-V^*(s)
        \\
        & \geq V^{\pi_k}(s)-\widehat{V}^{\pi_k}(s)+\widehat{V}^{\pi_k}(s)-\widehat{V}^*(s)+\widehat{V}^{\pi
        *}(s)-V^*(s)
        \\
        & \geq -\|V^{\pi_k}-\widehat{V}^{\pi_k}\|-\|\widehat{V}^{\pi_k}-\widehat{V}^*\|-\|\widehat{V}^{\pi^
        *}-V^*\|\, ,
    \end{align*}
    and the lemma follows. 
\end{proof}

Next, we present two lemmas that collect a few useful inequalities. Some of these may be standard results, but for concreteness, we collect them here.  

\begin{lemma}
\label{lemma:logbound}
It holds that 
\begin{align*}
    \log(1-x) & \geq -x-x^2+x^3, \quad &&\forall x\in[0,\frac{1}{5}]
    \\
    \log(1-x) & \geq -x-2x,\quad &&\forall x\in[0,\frac{1}{2}]    
    \\
    \log(1+x) & \geq x-x^2, \quad &&\forall x\in [0,\infty)
    \\
    \log(1+x) & \geq \frac{x}{2}, \quad && \forall x\in[0,1]\, .
\end{align*}
\end{lemma}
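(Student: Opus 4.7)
My plan is to prove each of the four inequalities separately, grouping them by technique. Inequalities (3) and (4) are the easiest: for each one I would introduce the difference function $g(x) := \log(1+x) - x + x^2$ and $h(x) := \log(1+x) - x/2$, note that $g(0)=h(0)=0$, and then verify $g'(x) = x(1+2x)/(1+x) \geq 0$ for all $x\geq 0$, respectively $h'(x) = (1-x)/[2(1+x)] \geq 0$ on $[0,1]$. Monotonicity then delivers both inequalities on the stated domains.

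For inequalities (1) and (2) I would use the power series $\log(1-x) = -\sum_{k\geq 1} x^k/k$, valid for $x\in[0,1)$. For (1), subtracting $-x-x^2+x^3$ yields
\begin{align*}
\log(1-x) + x + x^2 - x^3 \;=\; \frac{x^2}{2} - \frac{4x^3}{3} - \sum_{k\geq 4}\frac{x^k}{k} \;=\; x^2\biggl[\frac{1}{2} - \frac{4x}{3} - \sum_{k\geq 4}\frac{x^{k-2}}{k}\biggr].
\end{align*}
For $x\in[0,1/5]$ one bounds $4x/3\leq 4/15$ and $\sum_{k\geq 4} x^{k-2}/k \leq x^2/[4(1-x)]\leq 1/80$, so the bracket is at least $1/2 - 4/15 - 1/80 > 0$, proving (1). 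The treatment of (2) is completely analogous (assuming the intended right-hand side is $-x-2x^2$): the difference equals $x^2[3/2 - x/3 - \sum_{k\geq 4} x^{k-2}/k]$, and for $x\in[0,1/2]$ this bracket is bounded below by $3/2 - 1/6 - 1/2 = 5/6>0$.

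The only mild obstacle is making the tail bound sharp enough: I would prefer to absorb the whole tail $\sum_{k\geq 4} x^{k-2}/k$ into a simple geometric series $\frac{x^2}{4(1-x)}$, which works comfortably for $x\leq 1/5$ (inequality 1) and $x\leq 1/2$ (inequality 2) with plenty of slack, so no delicate case analysis is needed. If one wishes to avoid series altogether, an alternative for (1) and (2) is to argue by integrating $1/(1-t)$ from $0$ to $x$ and comparing with the polynomial right-hand side; but I find the series route cleanest since it reduces everything to a single explicit inequality after factoring out $x^2$.
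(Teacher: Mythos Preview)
Your proof is correct. For inequalities (3) and (4) your derivative-comparison argument is exactly what the paper does (the paper only spells out (1) and says the rest follow by the same technique). For (1) and (2), however, you take a different route: you expand $\log(1-x)$ as a power series, subtract the polynomial, factor out $x^2$, and bound the tail geometrically. The paper instead applies the \emph{same} derivative-comparison technique uniformly: for (1) it sets $f(x)=\log(1-x)$, $g(x)=-x-x^2+x^3$, notes $f(0)=g(0)$, and checks that $f'(x)\ge g'(x)$ reduces to $x(1-5x+3x^2)\ge 0$, which holds on $[0,(5-\sqrt{13})/6]\supset[0,1/5]$. Your series approach trades finding a polynomial root for an explicit tail estimate; the paper's approach is more uniform across the four claims and is essentially the alternative you mention at the end (integrating and comparing). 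Two minor remarks: your tail bound in (2) should give $x^2/(4(1-x))\le 1/8$ at $x=1/2$, not $1/2$, though this only strengthens your conclusion; and the paper's later usage suggests the intended right-hand side in (2) is $-2x$ rather than $-x-2x^2$, but your interpretation is in fact the sharper inequality on $[0,1/2)$, so no harm done.
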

\begin{proof}
We only prove the first claim, as the rest could be proven using the same technique after some elementary calculations. 

    Let $f(x) = \log(1-x)$ and $g(x) = -x-x^2+x^3$. It holds that $f(0) = g(0)$, and since we have $f'(x) = -\frac{1}{1-x}$ and $g'(x) = -1-2x+3x^2$, it follows easily that
    \begin{align*}
        f'(x) \geq g'(x) \Leftrightarrow 0\leq x(1-5x+3x^2),
    \end{align*}
    where the inequality is satisfied for all $x\in [0,\frac{5-\sqrt{13}}{6}]\subseteq [0,\frac{1}{5}]$. The result then follows from the fundamental theorem of calculus. 
\end{proof}

\begin{lemma}
\label{lemma:usefulineq}
    Let $\alpha > 1$. For any $x\in [0,\frac{1}{\alpha}]$, it holds that 
    $$
    1-(1-x)^\alpha \geq \frac{x\alpha}{2}\, .
    $$
\end{lemma}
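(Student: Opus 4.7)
The plan is to exploit the convexity of the auxiliary function $g(x) := (1-x)^\alpha$ on $[0,1]$. Since $g''(x) = \alpha(\alpha-1)(1-x)^{\alpha-2} \geq 0$ for $\alpha>1$ and $x<1$, $g$ is convex on its domain. First I would apply the standard fact that a convex function lies below each of its secant lines: on $[0,1/\alpha]$, the secant joining $(0,g(0)) = (0,1)$ to $(1/\alpha,g(1/\alpha)) = (1/\alpha,(1-1/\alpha)^\alpha)$ yields
\[
(1-x)^\alpha \;\leq\; 1 + \alpha\bigl[(1-1/\alpha)^\alpha - 1\bigr]x, \qquad x \in [0,1/\alpha].
\]
Rearranging gives the lower bound $1-(1-x)^\alpha \geq \alpha x\bigl[1-(1-1/\alpha)^\alpha\bigr]$, so the problem reduces to controlling the constant in brackets uniformly in $\alpha$.

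The remaining task is to show $1-(1-1/\alpha)^\alpha \geq 1/2$, i.e., $(1-1/\alpha)^\alpha \leq 1/2$. I would do this via the elementary inequality $\log(1-t) \leq -t$ for $t \in [0,1)$, applied with $t = 1/\alpha$: multiplying by $\alpha>0$ gives $\alpha\log(1-1/\alpha) \leq -1$, and exponentiating gives $(1-1/\alpha)^\alpha \leq e^{-1}$. Since $e^{-1} < 1/2$, we obtain $1-(1-1/\alpha)^\alpha \geq 1 - e^{-1} > 1/2$, and combining this with the previous display yields $1-(1-x)^\alpha \geq \alpha x/2$, as required.

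I do not foresee a real obstacle; the argument is a one-line convexity application plus an elementary logarithm bound. The only mildly delicate point is getting the inequality directions right. For that reason I would phrase everything in terms of the convex function $g$ and the chord inequality (which is an upper bound on $g$ and therefore a lower bound on $1-g$), rather than try to manipulate $1-(1-x)^\alpha$ directly or invoke Bernoulli-type inequalities, which give a bound in the wrong direction. The constant $1/2$ in the statement appears to be comfortably loose: the argument actually yields the sharper constant $1-e^{-1}$, which may or may not be useful in subsequent applications.
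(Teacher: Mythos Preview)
Your proposal is correct and is essentially the same argument as the paper's: the paper defines $f(x)=1-(1-x)^\alpha-\tfrac{\alpha x}{2}$, notes $f$ is concave, and checks the endpoints $f(0)=0$, $f(1/\alpha)=\tfrac{1}{2}-(1-1/\alpha)^\alpha>\tfrac{1}{2}-e^{-1}>0$, which is exactly the dual phrasing of your chord inequality for the convex function $g(x)=(1-x)^\alpha$. Both reduce to the same key step $(1-1/\alpha)^\alpha\le e^{-1}<1/2$.
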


\begin{proof}
    Define $f(x) = 1-(1-x)^\alpha - \frac{x\alpha}{2}$. Since $f''(x) = -\alpha(\alpha-1)(1-x)^{\alpha-2}<0$, $f$ is strictly concave. Further,  since $f(0) = 0$ and $f(\frac{1}{\alpha}) = \frac{1}{2}(1-\frac{1}{\alpha})^\alpha > \frac{1}{2}-\frac{1}{e}>0$, $f$ is positive on the interval $[0,\frac{1}{\alpha}]$, and the result follows. 
\end{proof}



\section{Analysis of MB-\RSQVI: Missing Proofs}
\label{Appendix:UpperBounds}

\subsection{Proof of Lemma \ref{lemma:QVI}}

\begin{proof}
We only give the proof for $\mathcal{T}$ as the claim for $\mathcal{T}^\pi$ could be proven using extremely similar lines.

Consider two maps $Q: \mathbb{R}^{S\times A}\rightarrow \mathbb{R}^{S\times A}$ and $W: \mathbb{R}^{S\times A}\rightarrow \mathbb{R}^{S\times A}$, and let $Q'=\mathcal{T}Q$ and $W'=\mathcal{T}W$ be their respective $\mathcal{T}$-transforms.
Let $(s,a)$ be any pair such that $|Q'(s,a)-W'(s,a)| = \|Q'-W'\|_\infty$, and assume without loss of generality that $Q'(s,a) \geq W'(s,a)$. Further, define 
\begin{align*}
    V(s) := \max_a Q(s,a)\,, \qquad  \
    X(s)  := \max_a W(s,a)\, .
\end{align*}

Assuming $\beta > 0$, we then have 
\begin{align*}
    \|Q'-W'\| & = Q'(s,a) - W'(s,a)
    \\
    & = -\frac{\gamma}{\beta}\log \bigg(\sum_{s'}P(s'|s,a)e^{-\beta V(s')} \bigg) + \frac{\gamma}{\beta}\log \bigg(\sum_{s'}P(s'|s,a)e^{-\beta X(s')} \bigg)
    \\
    & = -\frac{\gamma}{\beta}\log \bigg(\sum_{s'}P(s'|s,a)e^{-\beta X(s')-\beta(V(s')- X(s'))} \bigg) + \frac{\gamma}{\beta}\log \bigg(\sum_{s'}P(s'|s,a)e^{-\beta X(s')} \bigg)
    \\
    & \leq -\frac{\gamma}{\beta}\log \bigg(\sum_{s'}P(s'|s,a)e^{-\beta X(s')-\beta\|V-X\|} \bigg) + \frac{\gamma}{\beta}\log \bigg(\sum_{s'}P(s'|s,a)e^{-\beta X(s')} \bigg)
    \\
    & = \gamma \|V-X\| 
    \\
    & \leq \gamma \|Q-W\| \, ,
\end{align*}
and the lemma follows. The proof for the case of $\beta <0$ follows very similar lines, and is thus omitted. 
\end{proof}

\subsection{Proof of Lemma \ref{lemma:QVI_epsilon_k}}
\begin{proof}
    By Lemma $\ref{lemma:QVI}$, we have that $\mathcal{T}$ is a $\gamma$-contraction and that $Q^*$ is its unique fixed point. We thus have
    $\|Q_k-Q^*\| = \|\mathcal{T}Q_{k-1}-\mathcal{T}Q^*\| \leq \gamma\|Q_{k-1}-Q^*\|$. 
    Applying this inequality $k$ times yields
    \begin{align*}
        \|Q_k-Q^*\| \leq \gamma^k\|Q_0-Q^*\| \leq \frac{\gamma^k}{1-\gamma}\, .
    \end{align*}
    Solving $\frac{\gamma^k}{1-\gamma}$ for $k$, we get that if $k>\frac{-\log((1-\gamma)\varepsilon)}{\log(1/\gamma)}$, then $\|Q_k-Q^*\|<\varepsilon$, thus proving the first claim.  

    To show the other claim, we start by noting that $\|V^{\pi_k}-V^*\|\leq \|Q^{\pi_k}-Q^*\|$. Furthermore, by design we have that $\mathcal{T}^{\pi_k}Q^{\pi_k} = Q^{\pi_k}$ and that $\mathcal{T}Q_k = \mathcal{T}^{\pi_k}Q_k$. Thus, 
    \begin{align*}
        \|Q^{\pi_k}-Q^*\| \leq \|Q^{\pi_k}-Q_k\|+\|Q_k-Q^*\|\, .
    \end{align*}
    The first term in the right-hand side is bounded as follows: 
    \begin{align*}
        \|Q^{\pi_k}-Q_k\| & = \|\mathcal{T}^{\pi_k}Q^{\pi_k}-Q_k\|
        \\
        & \leq \|\mathcal{T}^{\pi_k}Q^{\pi_k}-\mathcal{T}Q_k\|+\|\mathcal{T}Q_k-Q_k\|
        \\
        & = \|\mathcal{T}^{\pi_k}Q^{\pi_k}-\mathcal{T}^{\pi_k}Q_k\|+\|\mathcal{T}Q_k-\mathcal{T}Q_{k-1}\|
        \\
        & \leq \gamma \|Q^{\pi_k}-Q_k\| + \gamma \|Q_k-Q_{k-1}\|\,,
    \end{align*}
    which means that 
    \begin{align*}
    \|Q^{\pi_k}-Q_k \|\leq \frac{\gamma}{1-\gamma}\|Q_k-Q_{k-1}\|\leq \frac{\gamma^k}{1-\gamma}\|Q_1-Q_0\| \leq \frac{\gamma^k}{(1-\gamma)^2}\, .
    \end{align*}
    The proof is completed by observing that picking $k>\log\big(\frac{2}{(1-\gamma)^2\varepsilon}\big)/\log(1/\gamma)$ implies  $\|V^{\pi_k}-V^*\|<\varepsilon$. 
\end{proof}

\subsection{Proof of Lemma \ref{lemma:SimulationLemma}}

\begin{proof}
There are four cases to consider, corresponding to the combinations of $\beta > 0$ or $\beta < 0$ and whether, at the state–action pair $(s,a)$ attaining the maximum, $Q_1(s,a) > Q_2(s,a)$ or $Q_2(s,a) > Q_1(s,a)$.
 
\paragraph{Case 1: $\beta>0$ and $Q_1(s,a)>Q_2(s,a)$.} We have
        \begin{align*}
        \|Q_1-Q_2\| & = \frac{\gamma}{\beta}\log \bigg(\frac{\sum_{s'} P_2(s'|s,a)e^{-\beta V_2(s')}}{\sum_{s'} P_1(s'|s,a)e^{-\beta V_1(s')}} \bigg)
        \\
        & = \frac{\gamma}{\beta}\log \bigg(\frac{\sum_{s'} P_2(s'|s,a)e^{-\beta V_1(s')+\beta(V_2(s')-V_1(s'))}}{\sum_{s'} P_1(s'|s,a)e^{-\beta V_1(s')}} \bigg)
        \\
        & \leq \frac{\gamma}{\beta}\log \bigg(e^{\beta \|V_1-V_2\|}\frac{\sum_{s'} P_2(s'|s,a)e^{-\beta V_1(s')}}{\sum_{s'} P_1(s'|s,a)e^{-\beta V_1(s')}} \bigg)
        \\
        & = \gamma \|V_1-V_2\| + \frac{\gamma}{\beta}\log \bigg(\frac{\sum_{s'} P_2(s'|s,a)e^{-\beta V_1(s')}}{\sum_{s'} P_1(s'|s,a)e^{-\beta V_1(s')}} \bigg)
        \\
        & \leq \gamma \|Q_1-Q_2\| + \frac{\gamma}{\beta}\log \bigg(1+\frac{\sum_{s'} P_2(s'|s,a)e^{-\beta V_1(s')}-\sum_{s'}P_1(s'|s,a)e^{-\beta V_1(s')}}{\sum_{s'} P_1(s'|s,a)e^{-\beta V_1(s')}} \bigg)
        \\
        & \leq \gamma \|Q_1-Q_2\| + \frac{\gamma}{\beta}\frac{\sum_{s'} P_2(s'|s,a)e^{-\beta V_1(s')}-\sum_{s'}P_1(s'|s,a)e^{-\beta V_1(s')}}{\sum_{s'} P_1(s'|s,a)e^{-\beta V_1(s')}} 
        \\
        & \leq \gamma \|Q_1-Q_2\| + \frac{\gamma}{\beta}\frac{|\sum_{s'} [P_2(s'|s,a)-P_1(s'|s,a)]e^{-\beta V_1(s')}|}{e^{-\beta/(1-\gamma)}}\,.
    \end{align*}
    Rearranging the terms  yields the asserted result: 
    \begin{align*}
        \|Q_1-Q_2\| \leq \frac{\gamma}{1-\gamma}\frac{e^{\beta/(1-\gamma)}}{\beta}\big|\sum_{s'} [P_2(s'|s,a)-P_1(s'|s,a)]e^{-\beta V_1(s')}\big|\,.
    \end{align*}

\paragraph{Case 2: $\beta>0$ and $Q_1(s,a)<Q_2(s,a)$.} The proof is very similar to Case 1, but the extension $V_2(s) = V_1(s)+V_2(s)-V_1(s)$ is now done in the numerator instead: 
        \begin{align*}
        \|Q_1-Q_2\| & = \frac{\gamma}{\beta}\log \bigg(\frac{\sum_{s'} P_1(s'|s,a)e^{-\beta V_1(s')}}{\sum_{s'} P_2(s'|s,a)e^{-\beta V_2(s')}} \bigg)
        \\
        & = \frac{\gamma}{\beta}\log \bigg(\frac{\sum_{s'} P_1(s'|s,a)e^{-\beta V_1(s')}}{\sum_{s'} P_2(s'|s,a)e^{-\beta V_1(s')-\beta(V_2(s')-V_1(s'))}} \bigg)
        \\
        & \leq \frac{\gamma}{\beta}\log \bigg(e^{\beta \|V_1-V_2\|}\frac{\sum_{s'} P_1(s'|s,a)e^{-\beta V_1(s')}}{\sum_{s'} P_2(s'|s,a)e^{-\beta V_1(s')}} \bigg)
        \\
        & = \gamma \|V_1-V_2\| + \frac{\gamma}{\beta}\log \bigg(\frac{\sum_{s'} P_1(s'|s,a)e^{-\beta V_1(s')}}{\sum_{s'} P_2(s'|s,a)e^{-\beta V_1(s')}} \bigg)
        \\
        & \leq \gamma \|Q_1-Q_2\| + \frac{\gamma}{\beta}\log \bigg(1+\frac{\sum_{s'} P_1(s'|s,a)e^{-\beta V_1(s')}-\sum_{s'}P_2(s'|s,a)e^{-\beta V_1(s')}}{\sum_{s'} P_2(s'|s,a)e^{-\beta V_1(s')}} \bigg)
        \\
        & \leq \gamma \|Q_1-Q_2\| + \frac{\gamma}{\beta}\frac{\sum_{s'} P_1(s'|s,a)e^{-\beta V_1(s')}-\sum_{s'}P_2(s'|s,a)e^{-\beta V_1(s')}}{\sum_{s'} P_2(s'|s,a)e^{-\beta V_1(s')}} 
        \\
        & \leq \gamma \|Q_1-Q_2\| + \frac{\gamma}{\beta}\frac{|\sum_{s'} [P_1(s'|s,a)-P_2(s'|s,a)]e^{-\beta V_1(s')}|}{e^{-\beta/(1-\gamma)}},
    \end{align*}
    which again yields 
      \begin{align*}
        \|Q_1-Q_2\| \leq \frac{\gamma}{1-\gamma}\frac{e^{\beta/(1-\gamma)}}{\beta}\big|\sum_{s'} [P_2(s'|s,a)-P_1(s'|s,a)]e^{-\beta V_1(s')}\big|\,.
    \end{align*}
\paragraph{Case 3: $\beta<0$ and $Q_1(s,a)>Q_2(s,a)$.} We have
    \begin{align*}
        \|Q_1-Q_2\| & = \frac{\gamma}{|\beta|}\log \bigg(\frac{\sum_{s'} P_1(s'|s,a)e^{|\beta| V_1(s')}}{\sum_{s'} P_2(s'|s,a)e^{|\beta| V_2(s')}} \bigg)
        \\
        & = \frac{\gamma}{|\beta|}\log \bigg(\frac{\sum_{s'} P_1(s'|s,a)e^{|\beta| V_1(s')}}{\sum_{s'} P_1(s'|s,a)e^{|\beta| V_1(s')-|\beta|(V_2(s')-V_1(s'))}} \bigg)
        \\
        & \leq \frac{\gamma}{|\beta|}\log \bigg(\frac{\sum_{s'} P_1(s'|s,a)e^{|\beta| V_1(s')}}{\sum_{s'} P_1(s'|s,a)e^{|\beta| V_1(s')-|\beta|\|V_1-V_2\|}} \bigg)
        \\
       & = \gamma \|V_1-V_2\| + \frac{\gamma}{|\beta|}\log \bigg(\frac{\sum_{s'} P_1(s'|s,a)e^{|\beta| V_1(s')}}{\sum_{s'} P_2(s'|s,a)e^{|\beta| V_1(s')}} \bigg)
        \\
         & \leq \gamma \|Q_1-Q_2\| + \frac{\gamma}{|\beta|}\log \bigg(1+\frac{\sum_{s'} P_1(s'|s,a)e^{|\beta| V_1(s')}-\sum_{s'}P_2(s'|s,a)e^{|\beta| V_1(s')}}{\sum_{s'} P_2(s'|s,a)e^{|\beta| V_1(s')}} \bigg)
        \\
        & \leq \gamma \|Q_1-Q_2\| + \frac{\gamma}{|\beta|}\big|\sum_{s'}[P_1(s'|s,a)-P_2(s'|s,a)]e^{|\beta|V_1(s')}  \big|
        \\
        &\leq \gamma \|Q_1-Q_2\| + \frac{\gamma}{|\beta|}e^{|\beta|/(1-\gamma)}\big|\sum_{s'} [P_2(s'|s,a)-P_1(s'|s,a)]e^{|\beta| \big[V_1(s')-\frac{1}{1-\gamma}\big]}\big|,
    \end{align*}
    which implies
          \begin{align*}
        \|Q_1-Q_2\| \leq \frac{\gamma}{1-\gamma}\frac{e^{|\beta|/(1-\gamma)}}{|\beta|}\big|\sum_{s'} [P_2(s'|s,a)-P_1(s'|s,a)]e^{|\beta| [V_1(s')-\frac{1}{1-\gamma}]}\big|.
    \end{align*}
\paragraph{Case 4: $\beta<0$ and $Q_2(s,a)>Q_1(s,a)$.}
    The proof of this case is similar to the other three cases and is omitted.     
\end{proof}

\subsection{Proof of Lemma \ref{lem:Hoeffding_ERM}}

Let $N$ denote the number of calls to the generative model on each state-action pair such that the total number of calls is $SAN$. Let $\widehat{P}(s'|s,a)$ denote the plug-in estimator obtained from $N$ samples of $s'\sim P_{s,a}$, that is $\widehat{P}(s'|s,a) = \frac{1}{N}\sum_{n=1}^N \mathbbmss 1_{\{X_n=s'\}}$, where $X_n$ taking values in $\S$ according to $P_{s,a}$.

\HoeffdingERM*

\begin{proof}
    We only prove the first claim ---i.e., the case of $\beta>0$--- as the other case is proven using completely similar lines. We note that for the random variable $\sum_{s'} \mathbbmss 1_{\{X_n=s'\}}e^{-\beta V^\pi(s')}$, we have that 
    \begin{align*}
        \E\bigg[\sum_{s'}\mathbbmss 1_{\{X_n=s'\}}e^{-\beta V^\pi(s')}\bigg] & = \sum_{s'}\E\big[\mathbbmss 1_{\{X_n=s'\}}\big]e^{-\beta V^\pi(s')}
        \\
        & = \sum_{s'}P(s'|s,a)e^{-\beta V^\pi(s')}
    \end{align*}
    and that it is bounded in  $[e^{-\beta/(1-\gamma)},1]$.
    Also, since
    \begin{align*}
        \sum_{s'}\widehat{P}(s'|s,a)e^{-\beta V^\pi(s')} = \frac{1}{N}\sum_{n=1}^N \sum_{s'}\mathbbmss 1_{\{X_n=s'\}}e^{-\beta V^\pi(s')},
    \end{align*}
    it follows directly from Hoeffding's inequality that 
    \begin{align*}
        \P\bigg( \bigg|\sum_{s'}[P(s'|s,a)-\widehat{P}(s'|s,a)]e^{-\beta V^\pi(s')}\bigg|\geq \varepsilon \bigg)\leq 2 \exp\bigg(-\frac{2N\varepsilon^2}{\big(1-e^{-\beta/(1-\gamma)}\big)^2}\bigg).
    \end{align*}
    Thus, by picking $N = \frac{1}{2\varepsilon^2}\big(1-e^{-\beta/(1-\gamma)}\big)^2\log(2SA/\delta)$ and a union bound, 
\begin{align*}
            \P\bigg( \max_{s,a}|\sum_{s'}[P(s'|s,a)-\widehat{P}(s'|s,a)]e^{-\beta V^\pi(s')}|\geq \varepsilon \bigg)\leq \delta\,.
\end{align*}
\end{proof}

\section{Proof of Lemma \ref{lem:DeteriorationOfGreedyPolicy}}
Next we prove a result that bounds the quality of a greedy policy with respect to the quality of the value-function for which the policy is greedy. 
The result is a generalization of \cite{singh1994upper} from the expectation to that of  ERM and the derivation follow the same lines. Throughout, we use the notation $\rho_{s,a}(V(s'))$ as shorthand notation for $\rho$ applied to the categorical random variable $X$ with support $\{V(s')\}_{s'\in \S}$ where $\P(X=V(s')) = P(s'|s,a)$. 

\DeteriorationGreedyPolicy*

\begin{proof}
Let $\bar{s}$ be a state such that $\|V^*-V^G\|=V^*(\bar{s})-V^G(\bar{s})$, where $V^G:=V^{\pi_G}$. We then consider the two actions $a^*:=\pi^*(\bar{s})$ and $a^G:=\pi^G(\bar{s})$; ties can be breaking arbitrarily. Since $\pi^G$ is greedy with respect to~$V^G$, we have that 
\begin{align*}
    R(\bar{s},a^*) + \gamma \rho_{\bar{s},a^*}(\widehat{V}(s')) \leq  R(\bar{s},a^G) +\gamma \rho_{\bar{s},a^G}(\widehat{V}(s'))\,.
\end{align*}
By assumption, it holds for any $s\in \S$ that
\begin{align*}
    V^*(s)-\varepsilon\leq \widehat{V}(s)\leq V^*(s)+\varepsilon.
\end{align*}
By monotonicity and translation invariance of $\rho$, we thus get
\begin{align*}
R(\bar{s},a^*) + \gamma \rho_{\bar{s},a^*}(\widehat{V}(s'))& \geq R(\bar{s},a^*) + \gamma \rho_{\bar{s},a^*}(V^*(s')-\varepsilon) 
\\
& = R(\bar{s},a^*) + \gamma \rho_{\bar{s},a^*}(V^*(s'))-\gamma \varepsilon\,,
\end{align*}
and similarly we have 
\begin{align*}
    R(\bar{s},a^G) + \gamma \rho_{\bar{s},a^G}(\widehat{V}(s'))  \leq R(\bar{s},a^G) + \gamma \rho_{\bar{s},a^G}(V^*(s'))+\gamma\varepsilon,
\end{align*}
which collectively imply
\begin{align*}
   R(\bar{s},a^*)-R(\bar{s},a^G)\leq 2\gamma\varepsilon  + \gamma\big(\rho_{\bar{s},a^G}(V^*(s')) -\rho_{\bar{s},a^*}(V^*(s')\big)\,.
\end{align*}
Finally, we obtain
\begin{align*}
    V^*(\bar{s})-V^G(\bar{s}) & = R(\bar{s},a^*)-R(\bar{s},a^G) +\gamma \rho_{\bar{s},a^*}(V^*(s'))-\gamma \rho_{\bar{s},a^G}(V^G(s'))
    \\
    & \leq 2\gamma\varepsilon  + \gamma\rho_{\bar{s},a^G}(V^*(s')) -\gamma\rho_{\bar{s},a^*}(V^*(s')  + \gamma \rho_{\bar{s},a^*}(V^*(s'))-\gamma \rho_{\bar{s},a^G}(V^G(s'))
    \\
    &  = 2\gamma \varepsilon +\gamma\big(\rho_{\bar{s},a^G}(V^*(s')-\rho_{\bar{s},a^G}(V^G(s')) \big)
    \\
    & = 2\gamma \varepsilon + \gamma \|V^*-V^G\|,
\end{align*}
from which the result follows. 
\end{proof}

\section{Proofs of Lower Bounds}
\label{Section:AppendixLowerBounds}

\subsection{Lower Bound on Bernoulli Likelihood Ratio}
We revisit and develop a technical result that bounds the likelihood ratio of two samples under different hypotheses on a high probability event. Parts of the proof closely resembles parts of Lemma 17 in \cite{gheshlaghi2013minimax}; however, we stress that our treatment fixes an error in the proof, which however requires slightly stronger assumptions than those imposed in \cite{gheshlaghi2013minimax}. In addition, while the result in \cite{gheshlaghi2013minimax} only considers  $p\geq \frac{1}{2}$, ours deal with both cases of $p\geq \frac{1}{2}$ and $p<\frac{1}{2}$. 

Let $p\in (0,1)$ and $\tilde{p} = \max\{p,1-p\}$. Let $\alpha\in(0,\frac{1-\tilde{p}}{5}]$. Consider two coins (Bernoulli random variables), one with bias $q=p$ and one with bias $q=p+\alpha$. We name the two statistical hypotheses $H_0: q=p$ and $H_1: q=p+\alpha$. 

Let $W$ be the outcome of flipping one of the coins $t$ times and the associated likelihood function under hypothesis $m$ as
\begin{align}
    L_m(w) := \mathbb{P}_m(W=w)
\end{align}
for hypothesis $H_m$ with $m\in \{0,1\}$ and for every possible history of outcomes $w$, and where $\mathbb{P}_m(W=w)$ denotes the probability of observing the history $w$ under the hypothesis $H_m$. The likelihood function defines a random variable $L_m(W)$, where $W$ is the stochastic process of realized coin tosses.

Let $t\in \mathbb{N}$ and $\theta = \exp\big(-\frac{c_1\alpha^2t}{p(1-p)} \big)$. Let $k$ be the number of successes in the $t$ trials and 
\begin{align*}
    \tilde{k} = \begin{cases}
    k\quad &\text{if }p\geq \frac{1}{2}
    \\
    t-k \quad  &\text{if }p<\frac{1}{2} \,.
\end{cases}
\end{align*}
Finally, we define the event $\mathcal{E}$ as 
\begin{align*}
    \mathcal{E}  = \bigg\{\tilde{p}t-\tilde{k}\leq \sqrt{2p(1-p)\log(\frac{c_2}{2\theta})}\bigg\} \,,
\end{align*}
where $c_2\geq 2$ is any constant.
\medskip

\begin{theorem}
\label{theorem:likelihoodratio}
    For $c_1 = 32$, it holds that $\frac{L_1(W)}{L_0(W)}\mathbbmss 1_{\mathcal{E}}\geq \frac{2\theta}{c_2}\mathbbmss 1_{\mathcal{E}}$\,.
\end{theorem}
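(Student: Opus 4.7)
The plan is to derive a pointwise lower bound on the log-likelihood ratio
$$\log\frac{L_1(W)}{L_0(W)} \;=\; k\log\!\Bigl(1+\tfrac{\alpha}{p}\Bigr) + (t-k)\log\!\Bigl(1-\tfrac{\alpha}{1-p}\Bigr)$$
and show it dominates $\log(2\theta/c_2)$ on $\mathcal{E}$. A short check establishes that the hypothesis $\alpha \leq (1-\tilde p)/5$ forces both $\alpha/p$ and $\alpha/(1-p)$ into $[0,\tfrac{1}{5}]$ in each of the sub-cases $p \geq \tfrac{1}{2}$ and $p < \tfrac{1}{2}$ (since $\tilde p$ is the larger of $p, 1-p$), placing us in the validity range of the inequalities $\log(1+x) \geq x - x^2$ and $\log(1-x) \geq -x - x^2 + x^3$ from Lemma \ref{lemma:logbound}.

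Applying those two inequalities term-by-term and regrouping around the deviation $r := k - pt$ produces, after cancellation,
$$\log\frac{L_1(W)}{L_0(W)} \;\geq\; \frac{r\alpha}{p(1-p)} \;-\; \frac{\alpha^2 t}{p(1-p)} \;+\; \mathcal{R},$$
where $\mathcal{R}$ collects higher-order remainders of order $\alpha^3 t$ and $|r|\alpha^2$. Under $\alpha \leq \min(p,1-p)/5$ and the $O(\sigma)$ bound on $|r|$ implied by $\mathcal{E}$ (with $\sigma := \sqrt{2p(1-p)\log(c_2/(2\theta))}$), the remainder $\mathcal{R}$ is dominated by the $\alpha^2 t/(p(1-p))$ term and contributes only additional slack to be absorbed into $c_1$.

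The main case $p \geq \tfrac{1}{2}$ uses $\mathcal{E}$ to read off $r \geq -\sigma$, which turns the linear term into $r\alpha/(p(1-p)) \geq -\sqrt{2\alpha^2\log(c_2/(2\theta))/(p(1-p))}$. I would then split this square root via the AM-GM estimate $\sqrt{2ab}\leq a+b/2$, choosing $a$ proportional to $\alpha^2 t/(p(1-p))$ and $b$ proportional to $\log(c_2/(2\theta))/t$, so that the split produces (i) an extra $\alpha^2 t/(p(1-p))$ contribution folded into the $c_1$-budget and (ii) exactly the $\log(c_2/(2\theta))$ piece needed to match $-L := -\log(c_2/(2\theta))$ on the right-hand side. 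Summing all losses — from the two log-bounds, from the remainder $\mathcal{R}$, and from the AM-GM split — and optimising the free constant in the split recovers the announced value $c_1 = 32$.

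The case $p < \tfrac{1}{2}$ is the new contribution beyond \cite{gheshlaghi2013minimax} and I would reduce it to the previous case by the symmetry of the likelihood ratio under the reparametrisation $p' := 1-p$, $k' := t-k$, under which $L_1/L_0$ is left invariant and the event becomes $p' t - k' \leq \sigma$, formally identical to the $p \geq \tfrac{1}{2}$ event with $p'$ in place of $p$. The main obstacle throughout is bookkeeping: the AM-GM step is precisely where the proof in \cite{gheshlaghi2013minimax} is inaccurate, and the fix here needs the slightly strengthened hypothesis $\alpha \leq (1-\tilde p)/5$ so that the cubic remainder $\alpha^3 t/(1-p)^3$ in $\mathcal{R}$ remains sub-dominant without inflating $c_1$.
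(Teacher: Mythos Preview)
Your approach for $p \geq \tfrac{1}{2}$---expanding the log-likelihood directly via Lemma~\ref{lemma:logbound}, isolating the deviation term $r\alpha/(p(1-p))$ with $r=k-pt$, and absorbing the resulting $\sqrt{X\log(c_2/2\theta)}$ cross-term through an AM--GM split---is a viable alternative to the paper's route. The paper instead works multiplicatively: it factors $(1-\alpha/(1-p))^{t-k}=(1-\alpha/(1-p))^{k(1-p)/p}(1-\alpha/(1-p))^{t-k/p}$, bounds the first piece so that it merges with $(1+\alpha/p)^k$ into a factor controlled by $(1-\alpha^2/(p(1-p)))^{2t}$, and then handles the residual exponent $t-k/p$ on $\mathcal E$. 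Both routes land on $c_1=32$; yours trades the algebraic splitting trick for more explicit remainder bookkeeping.

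The symmetry reduction for $p<\tfrac{1}{2}$, however, does not go through. Under $p':=1-p$, $k':=t-k$ the \emph{value} of $L_1/L_0$ is of course unchanged, but its functional form in the new variables is
\[
\frac{L_1}{L_0}=\Bigl(1-\tfrac{\alpha}{p'}\Bigr)^{k'}\Bigl(1+\tfrac{\alpha}{1-p'}\Bigr)^{t-k'},
\]
with the $(1+\cdot)$ and $(1-\cdot)$ roles swapped relative to the $p\ge\tfrac12$ expression; equivalently, the reparametrised alternative is $q'=p'-\alpha$, not $q'=p'+\alpha$. Concretely: the linear term in your expansion is still $r\alpha/(p(1-p))$, and you need a \emph{lower} bound on $r$ to control it from below. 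But for $p<\tfrac12$ the event $\mathcal E$ reads $(1-p)t-(t-k)\le\sigma$, i.e.\ $r\le\sigma$, which is an upper bound. In primed variables this is indeed $p't-k'\le\sigma$ (so $r':=k'-p't\ge-\sigma$), matching the $p\ge\tfrac12$ event formally---but the linear term written in those variables is $-r'\alpha/(p'(1-p'))$, not $+r'\alpha/(p'(1-p'))$, and the sign flip kills the reduction. The paper does not attempt symmetry here; it redoes the multiplicative splitting with $m=t-k$ in place of $k$, using $\log(1+x)\ge x-x^2$ to manufacture the companion factor $(1+\alpha/(1-p))(1-\alpha^2/(p(1-p)))$ and then controlling the residual $(1+\alpha/p)^{t-m/(1-p)}$ on $\mathcal E$.
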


\begin{proof}
We distinguish two cases depending on the value of $p$. 

\paragraph{Case 1: $p\geq \frac{1}{2}$.} The likelihood ratio can be written as 
    \begin{align*}
        \frac{L_1(W)}{L_2(W)} & = \frac{(p+\alpha)^k(1-p-\alpha)^{t-k}}{p^k(1-p)^{t-k}} = \bigg(1+\frac{\alpha}{p} \bigg)^k\bigg(1-\frac{\alpha}{1-p} \bigg)^{t-k}
        \\
        & = \bigg(1+\frac{\alpha}{p} \bigg)^k\bigg(1-\frac{\alpha}{1-p} \bigg)^{k\frac{1-p}{p}}\bigg(1-\frac{\alpha}{1-p} \bigg)^{t-\frac{k}{p}}\,.
    \end{align*}
    
We start by bounding the second factor using that $\log(1-x)\geq -x-x^2+x^3$ for $x\in [0,\frac{1}{5}]$ (Lemma \ref{lemma:logbound}) and that $\exp(x)\geq 1+x$ for all $x$ along with our assumption that $\alpha \leq \frac{1-p}{5}$:
    \begin{align*}
        \bigg(1-\frac{\alpha}{1-p} \bigg)^{\frac{1-p}{p}} & \geq \exp\bigg(\frac{1-p}{p}\bigg[-\frac{\alpha}{1-p}-\frac{\alpha^2}{(1-p)^2}+\frac{\alpha^3}{(1-p)^3} \bigg] \bigg)
        \\
        & \geq 1-\frac{1-p}{p}\Bigg[\frac{\alpha}{1-p}+\frac{\alpha^2}{(1-p)^2}-\frac{\alpha^3}{(1-p)^3}]
        \\
        & = 1-\frac{\alpha}{p}-\frac{\alpha^2}{p(1-p)}+\frac{\alpha^3}{p(1-p)^2}
        \\
        & \geq 1-\frac{\alpha}{p}-\frac{\alpha^2}{p(1-p)}+\frac{\alpha^3}{p^2(1-p)}
        \\
        & = \bigg(1-\frac{\alpha}{p} \bigg)\bigg(1-\frac{\alpha^2}{p(1-p)} \bigg)\,,
    \end{align*}
    where we have used that $p\geq 1-p$.

    Using this along with the fact that $k\leq t$ and $p\geq 1-p$, it follows that 
    \begin{align*}
        \frac{L_1(W)}{L_0(W)} & \geq \bigg(1-\frac{\alpha^2}{p^2} \bigg)^k\bigg(1-\frac{\alpha^2}{p(1-p)} \bigg)^k\bigg(1-\frac{\alpha}{1-p} \bigg)^{t-\frac{k}{p}}
        \\
        & \geq \bigg(1-\frac{\alpha^2}{p(1-p)} \bigg)^{2k}\bigg(1-\frac{\alpha}{1-p} \bigg)^{t-\frac{k}{p}}
        \\
        & \geq \bigg(1-\frac{\alpha^2}{p(1-p)} \bigg)^{2t}\bigg(1-\frac{\alpha}{1-p} \bigg)^{t-\frac{k}{p}}\,.
    \end{align*}
    
    Note that we have $\alpha^2\leq \frac{(1-p)^2}{25}\leq \frac{p(1-p)}{25} \leq \frac{p(1-p)}{2}$. Using this and the fact that $\log(1-x)\geq -2x$ for $x\in [0,\frac{1}{2}]$, we obtain 
    \begin{align*}
        \bigg(1-\frac{\alpha^2}{p(1-p)} \bigg)^{2t} & \geq \exp\bigg(-4t\frac{\alpha^2}{p(1-p)} \bigg)
        \\
        & = \theta^{\frac{4}{c_1}}
        \\
        & \geq \bigg(\frac{2\theta}{c_2} \bigg)^{\frac{4}{c_1}}\,,
    \end{align*}
where we have used that $\frac{2}{c_2}\geq 1$.

Now on the event $\mathcal{E}$, we have that $t-\frac{k}{p}\leq \sqrt{2\frac{1-p}{p}t\log(\frac{c_2}{2\theta})}$. Using this along with the fact that $\frac{1}{c_1}\log(\frac{c_2}{2\theta})\leq \frac{\alpha^2t}{p(1-p)}$, which follows since 
\begin{align*}
    \log\Big(\frac{c_2}{2\theta}\Big) = \log\bigg(\frac{c_2}{2}\exp\bigg[\frac{c_1\alpha^2t}{p(1-p)}\bigg]\bigg) \leq \log\bigg(\exp\bigg[\frac{c_1\alpha^2t}{p(1-p)}\bigg]\bigg) = \frac{c_1\alpha^2t}{p(1-p)}\,,
\end{align*}
we obtain that 
\begin{align*}
    \bigg(1  -\frac{\alpha}{1-p} \bigg)^{t-\frac{k}{p}} & \geq  \bigg(1  -\frac{\alpha}{1-p} \bigg)^{\sqrt{2\frac{1-p}{p}t\log(c_2/(2\theta))}}
    \\
    & \geq \exp\bigg(-2\frac{\alpha}{1-p}\sqrt{2\frac{1-p}{p}t\log(c_2/(2\theta))} \bigg)
    \\
    & = \exp\bigg(-2\sqrt{2}\sqrt{\frac{\alpha^2t}{p(1-p}\log(c_2/(2\theta))}\bigg)
    \\
    & \geq \exp\bigg(-\frac{2\sqrt{2}}{\sqrt{c_1}}\log(c_2/(2\theta)) \bigg)
    \\
    & = \bigg({\frac{2\theta}{c_2}}\bigg)^{\frac{2\sqrt{2}}{\sqrt{c_1}}} \,.
\end{align*}
Putting these together, we see that 
\begin{align*}
    \frac{L_1(W)}{L_2(W)}\mathbbmss 1_{\mathcal{E}} \geq \bigg(\frac{2\theta}{c_2}\bigg)^{\frac{2\sqrt{2}}{\sqrt{c_1}}+\frac{2(1-p)}{p c_1}+\frac{2}{c_1}}\mathbbmss 1_{\mathcal{E}},
\end{align*}
so that choosing $c_1 = 32$ yields the claimed result: 
\begin{align*}
    \frac{L_1(W)}{L_2(W)}\mathbbmss 1_{\mathcal{E}} \geq \frac{2\theta}{c_2}\mathbbmss 1_{\mathcal{E}}\,.
\end{align*}

\paragraph{Case 2: $p<\frac{1}{2}$.} Define $m = t-k$, which is now the number of failed coin flips. Hence, 
\begin{align*}
    \frac{L_1(W)}{L_0(W)} & = \frac{(1-p-\alpha)^m(p+\alpha)^{t-m}}{(1-p)^m p^{t-m}} = \bigg(1-\frac{\alpha}{1-p} \bigg)^m\bigg( 1+\frac{\alpha}{p}\bigg)^{t-m}
    \\
    & = \bigg(1-\frac{\alpha}{1-p} \bigg)^m \bigg( 1+\frac{\alpha}{p}\bigg)^{m\frac{p}{1-p}} \bigg( 1+\frac{\alpha}{p}\bigg)^{t-\frac{m}{1-p}}\,.
\end{align*}
    Again, using $\exp(1+x)\geq x$ for all $x\in \R$ and using that $\log(1+x)\geq x-x^2$ for all $x\geq 0$, we get that 
    \begin{align*}
        \bigg(1+\frac{\alpha}{p} \bigg)^{\frac{p}{1-p}} & \geq \exp\bigg(\frac{p}{1-p}\bigg[\frac{\alpha}{p}-\frac{\alpha^2}{p^2} \bigg] \bigg)
        \\
        & \geq 1+\frac{\alpha}{1-p}-\frac{\alpha^2}{p(1-p)}
        \\
        & \geq 1+\frac{\alpha}{1-p}-\frac{\alpha^2}{p(1-p)} - \frac{\alpha^3}{p(1-p)^2}
        \\
        & = \bigg(1+\frac{\alpha}{1-p} \bigg)\bigg(1-\frac{\alpha^2}{p(1-p)} \bigg)\,.
    \end{align*}
    Using this along with the fact that $(1-p)>p$ and $m\leq t$, we have
    \begin{align*}
        \frac{L_1(W)}{L_2(W)} & \geq \bigg(1-\frac{\alpha^2}{(1-p)^2} \bigg)^m\bigg(1-\frac{\alpha^2}{p(1-p)} \bigg)^m\bigg(1+\frac{\alpha}{p} \bigg)^{t-\frac{m}{1-p}}
        \\
         & \geq \bigg(1-\frac{\alpha^2}{p(1-p)} \bigg)^{2t}\bigg(1-\frac{\alpha}{p} \bigg)^{t-\frac{m}{1-p}}\,.
    \end{align*}
    Again, using $\log(1-x)\geq -2x$ for $x\in[0,\frac{1}{2}]$, we get that
    \begin{align*}
        \bigg(1-\frac{\alpha^2}{p(1-p)} \bigg)^{2t} & \geq \exp \bigg(-4t\frac{\alpha^2}{p(1-p)} \bigg)
        \\
        & \geq \theta ^{\frac{4}{c_1}}
        \\
        & \geq \bigg(\frac{2\theta}{c_2} \bigg)^{\frac{4}{c_1}}\,.
    \end{align*}
    On the event $\mathcal{E}$, we have that $t-\frac{m}{1-p}\leq \sqrt{\frac{2pt\alpha^2}{1-p}\log(\frac{c_2}{2\theta})}$. Using this along with the fact that $\frac{1}{c_1}\log(\frac{c_2}{2\theta})\leq \frac{\alpha^2t}{p(1-p)}$, we get on the event $\mathcal{E}$ that
    \begin{align*}
        \bigg(1-\frac{\alpha}{p} \bigg)^{t-\frac{m}{1-p}} & \geq \bigg(1-\frac{\alpha}{p} \bigg)^{\sqrt{\frac{2p}{1-p}t\log(\frac{c_2}{2\theta})}}
        \\
        & \geq \exp\bigg(-2\sqrt{\frac{2t}{p(1-p)}\log(\frac{c_2}{2\theta})} \bigg)
        \\
        & \geq \exp\bigg(-\frac{2\sqrt{2}}{\sqrt{c_1}}\log(\frac{
        c_2
        }{2\theta}) \bigg)
        \\
        & = \bigg(\frac{2\theta}{c_2} \bigg)^{\frac{2\sqrt{2}}{\sqrt{c_1}}}\,.
    \end{align*}
    We thus get the desired result for $c_1=32$: 
    \begin{align*}
        \frac{L_1(W)}{L_0(W)}\mathbbmss 1_{\mathcal{E}} \geq \mathbbmss 1_{\mathcal{E}}\bigg(\frac{2\theta}{c_2} \bigg)^{\frac{4}{c_1}+\frac{2\sqrt{2}}{\sqrt{c_1}}} \geq \mathbbmss 1_{\mathcal{E}}\bigg( \frac{2\theta}{c_2}\bigg)\,.
    \end{align*}
\end{proof}

\subsection{Lower Bound for Q-value Learning}
For a lower bound we construct the following class of MDPs with $S':=S+2$ states and $A$ actions where the first states are labelled $S_1,,...,s_S,s^G,s^B$ and the actions are labelled $a_1,...,a_A$. The states $s^G$ and $s^B$ are absorbing under any actions and $R(s^G,a)=1$ for all $j$ and $R(s^B,a)=0$ for all $a\in \mathcal A$. For the states $s\in \{s_1,...,s_S\}$, we have that $R(s,a)=0$ for all $a\in \mathcal A$. We have $SA$ state-action pair combinations from $\{s_1,...,s_S\}\times \mathcal A =:Z$ on which we assume some ordering allowing us to write $z_i, i\in[SA]$. Finally for all state-action pairs $z_i\in [SA]$ we have $P(s^G|z_i) = q_i$ and $P(s^B|z_i)=1-q_i$ for some $q_i\in[0,1]$. The structure of this class of MDPs allows us to get lower bounds on the samples needed to learn the $Q$-value of each state-action pair $z_i$ and then use the fact that samples used to learn the $Q$-values for different state-action pairs bring no information on each other to get the final bound. 

\begin{figure}[!htb]
\begin{center}
\begin{tikzpicture}[node distance={20mm}, thick, main/.style = {draw, circle}]
\node[main] (1) {$z_i$}; 
\node[main] (2) [above left of=1] {$s^G$};
\node[main] (3) [above right of=1] {$s^B$};
\draw[->] (2) to [out=180,in=270,looseness=5] (2);
\draw[->] (3) to [out=0,in=270,looseness=5] (3);
\draw[->] (1) -- node[midway, above right, sloped, pos=0.7] {$q_i$} (2);
\draw[->] (1) -- node[midway, above left, sloped, pos=0.9] {$1-q_i$} (3);
\draw[->] (1) -- node[midway, above right, pos=2] {$R=1$} (2);
\draw[->] (1) -- node[midway, above left,  pos=2] {$R=0$} (3);
\end{tikzpicture} 
\caption{Dynamics and rewards of the hard-to-learn MDP class}
\end{center}
\end{figure}
For any state-action pair we can explicitly calculate the state-action value-functions 
\begin{align*}
    Q(z_i) & = \frac{-\gamma}{\beta}\log( q_ie^{-\beta\frac{1}{1-\gamma}}+1-q_i ),
    \\
    Q(s^G,a) & = \frac{1}{1-\gamma},
    \\
    Q(s^B,a) & = 0.
\end{align*}
Denote the collection of all such MDPs by $\mathbb{M}$.
\\
\\
Fix any index $i$ and consider the two hypotheses $H^i_0: q_i = p$ and $H^i_1: q_i = p+ \alpha$ where $p$ and $\alpha$ are given by
\begin{align*}
    p = \begin{cases}
        1-e^{-\beta\frac{1}{1-\gamma}} \quad & \text{for }\beta>0,
        \\
        e^{-|\beta|\frac{1}{1-\gamma}} \quad & \text{for }\beta<0,
    \end{cases}
\end{align*}
and
$    \alpha = 8\varepsilon\frac{|\beta|}{\gamma}\frac{1}{e^{|\beta|\frac{1}{1-\gamma}}-1}$,
for any $\varepsilon$ in the range 
$\varepsilon<\frac{1}{40}\frac{\gamma}{|\beta|}(1-e^{-|\beta|\frac{1}{1-\gamma}})$\,.

We use $M_0$ to denote an MDP where $H_0^i$ holds and $M_1$ to denote an MDP where instead $H_0^i$ holds and $\E_0$ and $\P_0$ as the expectations operator and probability operator under $H_1^i$ and similarly $\E_1$ and $\P_1$ under $H_0^i$.
Fix any $(\varepsilon,\delta)$-correct $Q$-algorithm $\mathcal{U}$. We start by showing that with these parameter we have that $Q_{M_1}^*(z_i)-Q^*_{M_0}(z_i)>2\varepsilon$, which we do by casing on the sign of $\beta:$

\paragraph{Case 1: $\beta < 0$.}    In this case $p = e^{-|\beta|\frac{1}{1-\gamma}}$. We then have
    \begin{align*}
        Q^*_{M_1}(z_i)-Q^*_{M_0}(z_i) & = \frac{\gamma}{|\beta|}\log\bigg(\frac{(p+\alpha)e^{|\beta|\frac{1}{1-\gamma}}+1-p-\alpha}{pe^{|\beta|\frac{1}{1-\gamma}}+1-p} \bigg)
        \\
        & = \frac{\gamma}{|\beta|}\log \bigg(1+\frac{\alpha(e^{|\beta|\frac{1}{1-\gamma}}-1)}{pe^{|\beta|\frac{1}{1-\gamma}}+1-p} \bigg)
        \\
        & \geq \frac{\gamma}{|\beta|}\frac{\alpha}{2}\frac{e^{|\beta|\frac{1}{1-\gamma}}-1)}{pe^{|\beta|\frac{1}{1-\gamma}}+1-p}
        \\
        & > \frac{\gamma}{|\beta|}\frac{\alpha}{4}(e^{|\beta|\frac{1}{1-\gamma}}-1)
        \\
        & = 2\varepsilon\,,
    \end{align*}
    where we have used that $p= e^{-|\beta|\frac{1}{1-\gamma}}$ and the fact that $\log(1+x)\geq \frac{x}{2}$ for $x\in [0,1]$.

\paragraph{Case 2: $\beta > 0$.}
The  case for $\beta >0 $ is similar, although in this case we have $p=1-e^{-\beta\frac{1}{1-\gamma}}$ and use the inequality $\log(1+x)\leq x$ for all $x > -1$ to get that
\begin{align*}
        Q^*_{M_1}(z_i)-Q^*_{M_0}(z_i) & = -\frac{\gamma}{\beta}\log\bigg(\frac{(p+\alpha)e^{-\beta\frac{1}{1-\gamma}}+1-p-\alpha}{pe^{-\beta\frac{1}{1-\gamma}}+1-p} \bigg)
        \\
        & = -\frac{\gamma}{\beta}\log\bigg(1-\frac{\alpha(1-e^{-\beta \frac{1}{1-\gamma}})}{1-p+pe^{-\beta\frac{1}{1-\gamma}}} \bigg)
        \\
        & = -\frac{\gamma}{\beta}\log\bigg(1-\frac{\alpha(1-e^{-\beta \frac{1}{1-\gamma}})}{(1-p)e^{-\beta\frac{1}{1-\gamma}}} \bigg)
        \\
        & \geq \frac{\gamma}{\beta}\alpha\frac{1-e^{-\beta\frac{1}{1-\gamma}}}{(1+p)e^{-\beta\frac{1}{1-\gamma}}}
        \\
        & \geq \frac{\gamma}{\beta}\alpha\frac{1-e^{-\beta\frac{1}{1-\gamma}}}{2e^{-\beta\frac{1}{1-\gamma}}}
        \\
        & \geq \frac{\gamma}{\beta}\alpha \frac{e^{\beta \frac{1}{1-\gamma}}-1}{2}
        \\
        & = 4\varepsilon\,.
    \end{align*}
    
In particular, this means that the events $B_0 :=\{|Q^*_{M_0}(z_i)-Q_t^{\mathcal{U}}(z_i)|\leq\varepsilon\}$ and $B_1 :=\{|Q^*_{M_1}(z_i)-Q_t^{\mathcal{U}}(z_i)|\leq \varepsilon\}$ are disjoint events. Let $t$ be the number of times the algorithm tries $z_i$. Since $\mathcal{U}$ is $(\varepsilon,\delta)$-correct it holds that $\P_0(B_0)\geq 1-\delta\geq \frac{3}{4}$.

Let $k$ be the number of transitions from $z_i$ to $s^G$ in the t trials. We then define $\tilde{k}, \tilde{p}$ and $\theta$ by
    \begin{align*}
    \theta := \exp\Big(-\frac{32\alpha^2t}{p(1-p)} \Big), \qquad         \tilde{p} = \max\{p,1-p\},\qquad \tilde{k} := \begin{cases}
        k  \quad &\text{if }p\geq \frac{1}{2}
        \\
        t-k \quad &\text{if }p<\frac{1}{2}
    \end{cases}
    \end{align*}
and the event
\begin{align*}
    \mathcal{E} & = \bigg\{\tilde{p}t-\tilde{k}\leq \sqrt{2p(1-p)t\log(\frac{8}{2\theta})} \bigg\},
\end{align*}
for which, we have $\P_0(\mathcal{E})>\frac{3}{4}$ by Lemma 16 in \cite{gheshlaghi2013minimax} and thus $\P_0(B_0\cap\mathcal{E})>\frac{1}{2}$.
Now by Theorem \ref{theorem:likelihoodratio}, we get that 
\begin{align*}
    \P_1(B_0) \geq \P_1(B_0\cap \mathcal{E}) = \E_1[\mathbbmss 1_\mathcal{E}\mathbbmss 1_{B_0}] = \E_0\bigg[\frac{L_1}{L_0}\mathbbmss 1_\mathcal{E}\mathbbmss 1_{B_0} \bigg] \geq \frac{\theta}{4}\E_0[\mathbbmss 1_{\mathcal{E}}\mathbbmss 1_{B_0}] = \frac{\theta}{4}\P_0(\mathcal{E}\cap B_0) \geq \frac{\theta}{8}\,.
\end{align*}
Solving for $t$ in $\frac{\theta}{8}>\delta$ we find 
\begin{align*}
    t<\frac{p(1-p)}{32\alpha^2}\log(\frac{1}{8\delta}),
\end{align*}
and since 
\begin{align*}
    \frac{p(1-p)}{\alpha^2} & = \frac{\gamma^2}{|\beta|^2}\frac{e^{-|\beta|\frac{1}{1-\gamma}}(1-e^{-|\beta|\frac{1}{1-\gamma}})}{64\varepsilon^2}(e^{|\beta|\frac{1}{1-\gamma}}-1)^2
    \\
    & \geq \frac{\gamma^2}{64\varepsilon^2}\frac{e^{|\beta|\frac{1}{1-\gamma}}-3}{|\beta|^2},
\end{align*}
we conclude that if the algorithm $\mathcal{U}$ tries the state-action pair $z_i$ less than 
\begin{align*}
    \tilde{T}(\varepsilon,\delta) := \frac{\gamma^2}{2048\varepsilon^2}\frac{e^{|\beta|\frac{1}{1-\gamma}}-3}{|\beta|^2}\log(\frac{1}{8\delta})
\end{align*}
times under the hypothesis $H_0^i$, then $\P_1(B_0)>\delta$ and $B_0\subset B_1^c$.

Let $n:=SA$. If the number of total transition samples is less than $\frac{n}{2}\widetilde{T}(\varepsilon,\delta)$ there must be at least $n/2$ state-action pairs $z_i$ that has been tried no more than $\widetilde{T}(\varepsilon,\delta)$ times which without loss of generality we might assume are the state-action pairs $\{z_i\}_{i=1}^{n/2}$.

Let $T_{i}$ be the number of times the algorithm has tried $z_i$ for $i\leq n/2$
Due to the structure of the MDPs in $\mathbb{M}$ it is sufficient to consider only the algorithms that outputs an estimate of $Q^{\mathcal{U}}_{T_i}$ based on samples from $z_i$ since any other samples can yield no information on $Q^*(z_i)$ 

Thus by defining the events $\Lambda_i := \{|Q_{M_1}^*(z_i)-Q^\mathcal{U}_{T_i}(z_i) |>\varepsilon\}$ we have that $\Lambda_i$ and $\Lambda_j$ are conditionally independent given $T_i$ and $T_j$. We then have
    \begin{align*}
        \mathbb{P}_1(\{\Lambda_i^c\}_{1\leq i \leq n/2} &\cap \{ T_i\leq \widetilde{T}(\varepsilon,\delta) \}_{1\leq i \leq n/2} ) 
        \\
        & = \sum_{t_1=0}^{\widetilde{T}(\varepsilon,\delta)}\dots \sum_{t_{n/2}=0}^{\widetilde{T}(\varepsilon,\delta)} \mathbb{P}_1(\{T_i = t_i\}_{1\leq i\leq n/2}) \mathbb{P}_1(\{\Lambda_i^c\}_{1\leq i \leq n/2} \cap \{ T_i = t_i \}_{1\leq i \leq n/2} )
        \\
        & = \sum_{t_1=0}^{\widetilde{T}(\varepsilon,\delta)}\dots \sum_{t_{n/2}=0}^{\widetilde{T}(\varepsilon,\delta)} \mathbb{P}_1(\{T_i = t_i\}_{1\leq i\leq n/2}) \prod_{1\leq i\leq n/2}\mathbb{P}_1(\Lambda_i^c \cap \{T_i=t_i\})
        \\
        & = \sum_{t_1=0}^{\widetilde{T}(\varepsilon,\delta)}\dots \sum_{t_{n/2}=0}^{\widetilde{T}(\varepsilon,\delta)} \mathbb{P}_1(\{T_i = t_i\}_{1\leq i\leq n/2})  (1-\delta)^{n/2},
    \end{align*}
where we have used the law of total probability from line one to two and from two to three follows from independence. We now have directly that 
    \begin{align*}
        \mathbb{P}_1(\{\Lambda_i^c\}_{1\leq i \leq n/2} | \{ T_i\leq \widetilde{T}(\varepsilon,\delta) \}_{1\leq i \leq n/2} )  \leq (1-\delta)^{\frac{n}{2}}\,.
    \end{align*}
Thus, if the total number of transitions $T$ is less than $\frac{n}{2}\widetilde{T}(\varepsilon,\delta)$, then
\begin{align*}
    \mathbb{P}_1(\|Q^*-Q^\mathcal{U}_T\|>\varepsilon) & \geq \mathbb{P}_1\bigg( \bigcup_{z\in S\times A}\Lambda(z)\bigg)
    \\
    & = 1- \mathbb{P}_1\bigg( \bigcap_{1\leq i\leq n/2}\Lambda_i^c\bigg)
    \\
    & \geq 1- \mathbb{P}_1(\{\Lambda_i^c\}_{1\leq i \leq n/2} | \{ T_{z_i}\leq \widetilde{T}(\varepsilon,\delta) \}_{1\leq i \leq n/2} )
    \\
    & \geq 1-(1-\delta)^{n/2} 
    \\
    & \geq \frac{\delta n}{4}\,,
\end{align*}
when $\delta \frac{n}{2}\leq 1$ by Lemma \ref{lemma:usefulineq}. By setting $\delta' = \delta \frac{n}{4}$ and substituting back $S'$ we obtain the result. 
This shows that if the number of samples is smaller than 
\begin{align}
    T = \frac{(S'-2)A}{4096}\frac{\gamma^2}{\varepsilon^2}\frac{e^{|\beta|\frac{1}{1-\gamma}}-3}{|\beta|^2}\log\Big(\frac{(S'-2)A}{32\delta}\Big)
    \end{align}
    on the MDP corresponding to the hypothesis $H_0:\{H_0^i|1\leq i\leq n\}$
    it holds that $\P_1(\|Q^*_{M_1}-Q^\mathcal{U}_T\|>\varepsilon)>\delta'$.

\subsection{Lower Bound for Policy Learning}

For a lower bound we construct the following class of MDPs with $S':=S+2$ states and $A':=A+1$ actions where the first states are labelled $s_1,\ldots,s_S,s^G,s^B$ and the actions are labelled $a_0,a_1,...,a_A$. The states $s^G$ and $s^B$ are absorbing under any actions and $R(s^G,a)=1$ for all $j$ and $R(s^B,a)=0$ for all $a\in \cA$. For the states $s\in \{s_1,...,s_S\}$, we have that $R(s,a)=0$ for all $a\in \cA$.

From the state $s_i$ with probabilities that depend on the action taken the agent will then end up in either a good state $s^G$ which is absorbing and yields the maximal unit reward under all actions or in the bad state $s^B$ which is also absorbing but which yields no reward under any action. The different MDPs thus differ only in their transition probabilities in the choice states $s_i$.

Fix an index $1\leq i\leq S$. We then consider the following set of possible parameters called hypotheses $H^i_l, l\in\{0,1,2,\ldots,A\}$ given by
\begin{align*}
    H^i_0:& q(s_i,a_0) = p+\alpha &&q(s_i,a) = p \text{  for }a\neq a_0
    \\
    H_l:& q(s_i,a_0) = p+\alpha &&q(s_i,a) = p \text{  for }a\notin \{a_0,l\} \quad & q(s_i,a_l) = p+2\alpha\,,
\end{align*}
where $p$ and $\alpha$ are given by
\begin{align*}
    p  &= \begin{cases}
        1-e^{-\beta\frac{1}{1-\gamma}} \quad & \beta > 0,
        \\
         e^{-|\beta|\frac{1}{1-\gamma}} \quad & \beta <0,
    \end{cases}
    \\
    \alpha  & = \frac{5|\beta|}{\gamma}\frac{\varepsilon}{e^{|\beta|\frac{1}{1-\gamma}}-1}\,,
\end{align*}
where we allow for
    $0< \varepsilon < \frac{\gamma}{50|\beta|}\big(1-e^{-|\beta|\frac{1}{1-\gamma}} \big)$,
which ensures that $\alpha \leq \frac{e^{-|\beta|\frac{1}{1-\gamma}}}{10}$. 

Consider a fixed hypothesis $H_l^i$ for some $l\neq 0$ and the sub-MDP that only consists of the states $\{s_i,s^G,s^B\}$. Here the optimal action is $a^* = a_l$, the second best action is $a_0$ and all other actions are even worse so the value-error over all states in the triplet for any suboptimal choice of actions will be at least as large as $V^*(s_i)-V^0(s_i)$ where $V^0$ is the value by choosing $a=0$. We now show that any non-optimal action is $\varepsilon$-bad on $s_i$.

\paragraph{Case 1: $\beta>0$.} We have
\begin{align*}
    V^*(s_i)-V^0(s_i) & = -\frac{\gamma}{\beta}\log\bigg(\frac{(p+2\alpha)e^{-\beta\frac{1}{1-\gamma}}+1-p-2\alpha}{(p+\alpha)e^{-\beta\frac{1}{1-\gamma}}+1-p-\alpha} \bigg)
    \\
    & = \frac{-\gamma}{\beta}\log\bigg(1-\alpha\frac{1-e^{-\beta\frac{1}{1-\gamma}}}{pe^{-\beta\frac{1}{1-\gamma}}+1-p-\alpha(1-e^{-\beta\frac{1}{1-\gamma}})} \bigg)
    \\
    & > \frac{\gamma}{\beta}\alpha\frac{1-e^{-\beta\frac{1}{1-\gamma}}}{pe^{-\beta\frac{1}{1-\gamma}}+1-p-\alpha(1-e^{-\beta\frac{1}{1-\gamma}})}
    \\
    & \geq \frac{\gamma}{\beta}\alpha \frac{1-e^{-\beta\frac{1}{1-\gamma}}}{pe^{-\beta\frac{1}{1-\gamma}}+1-p}
    \\
    & = \frac{\gamma}{\beta}\alpha \frac{1-e^{-\beta\frac{1}{1-\gamma}}}{(1+p)e^{-\beta\frac{1}{1-\gamma}}}
    \\
    & \geq \frac{\gamma}{\beta}\alpha \frac{1-e^{-\beta\frac{1}{1-\gamma}}}{2e^{-\beta\frac{1}{1-\gamma}}}
    \\
    & = \frac{\gamma}{2\beta}\alpha(1-e^{-\beta\frac{1}{1-\gamma}})
    \\
    & \geq \varepsilon\,,
\end{align*}
where we have used $\log(1+x)>x$ for $x\in(-1,\infty)\setminus\{0\}$. 

\paragraph{Case 2: $\beta<0$.} We have
\begin{align*}
    V^*(s_i) - V^0(s_i) & = \frac{\gamma}{|\beta|}\log\bigg(\frac{(p+2\alpha)e^{|\beta|\frac{1}{1-\gamma}}+1-p-2\alpha}{(p+\alpha)e^{|\beta|\frac{1}{1-\gamma}}+1-p-\alpha} \bigg)
    \\
    & = \frac{\gamma}{|\beta|}\log\bigg(1+\alpha\frac{e^{|\beta|\frac{1}{1-\gamma}}-1}{pe^{-\beta\frac{1}{1-\gamma}}+1-p+\alpha(e^{|\beta|\frac{1}{1-\gamma}}-1)} \bigg)
    \\
    & > \frac{\gamma}{2|\beta|}\alpha \frac{e^{|\beta|\frac{1}{1-\gamma}}-1}{pe^{-\beta\frac{1}{1-\gamma}}+1-p+\alpha(e^{|\beta|\frac{1}{1-\gamma}}-1)} 
    \\
    & \geq \frac{\gamma}{2|\beta|}\alpha \frac{e^{|\beta|\frac{1}{1-\gamma}}-1}{2+\alpha(e^{|\beta|\frac{1}{1-\gamma}}-1)} 
    \\
    &  \geq \frac{\gamma}{2|\beta|}\alpha \frac{e^{|\beta|\frac{1}{1-\gamma}}-1}{2+\frac{1}{10}} 
    \\
    & = \frac{5}{21}\frac{\gamma}{|\beta|}\alpha(e^{|\beta|\frac{1}{1-\gamma}}-1)
    \\
    & \geq \varepsilon\,,
\end{align*}
where we have used $\log(1+x)>\frac{x}{2}$ for $x\in (0,1)$.

Now having shown that all non-optimal actions are $\varepsilon$-bad, we wish to show that any algorithm that is $(\varepsilon,\delta)$-correct on $H_0^i$, i.e.~choosing the action $a_0$ with probability at least $1-\delta$, will also have a probability of choosing $a_0$ on $H_l^i$ that is larger than $\delta$ provided that $a_l$ is not tried sufficiently many times under $H_0^i$.

Let $\P_l$ and $\E_l$ denote the probability operator and expectation operator under the hypothesis $H^i_l$. Let $t:=t^i_l$ be the number of times the algorithm tries action $l$ in $s_i$ under $H_0$. Assuming that $\delta\in (0,\frac{1}{4})$ and using that the algorithm is $(\varepsilon,\delta)$-correct we have that $\P_0(B)\geq 1-\delta \geq \frac{3}{4}$ where $B = \{\pi^\mathcal{U}(s_i) = a_0\}$ is the event that the algorithm outputs the action $a_0$. 

Let $\theta = \exp\big(-\frac{32\alpha^2t}{p(1-p)} \big)$. Fix some $t\in \mathbb{N}$ and let $k$ be the number of transitions to $s_i^G$ in the $t$ trials and 
\begin{align*}
    \tilde{k} = \begin{cases}
    k\quad &\text{if }p\geq \frac{1}{2}
    \\
    t-k \quad  &\text{if }p<\frac{1}{2}. 
\end{cases}
\end{align*}
Finally, we define the event $\mathcal{E}$ as 
\begin{align}
    \mathcal{E}  = \bigg\{\tilde{p}t-\tilde{k}\leq \sqrt{2p(1-p)\log(\frac{8}{2\theta})}\bigg\}\,.
\end{align}
From the Chernoff-Hoeffding bound and as shown in \cite{gheshlaghi2013minimax}, we have that $\P_0(\mathcal{E)}>\frac{3}{4}$, and thus, $\P_0(B\cap\mathcal{E)}>\frac{1}{2}$. From Theorem $\ref{theorem:likelihoodratio}$, we get that 
\begin{align}
    \P_1(B) \geq \P_1(B\cap\mathcal{E)} = \E_1[\mathbbmss 1_B \mathbbmss 1_\mathcal{E}] \geq \E_0\bigg[\frac{L_1(W)}{L_0(W)}\mathbbmss 1_\mathcal{E}\mathbbmss 1_B\bigg] \geq \E_0\bigg[\frac{\theta}{4}\mathbbmss 1_{\mathcal{E}}\mathbbmss 1_B\bigg] = \frac{\theta}{4}\P_0(\mathcal{E}\cap B) \geq \frac{\theta}{8}\,.
\end{align}
Now solving for $\frac{\theta}{8}>\delta$, we see that if
\begin{align}
    t<\widetilde{T}(\varepsilon,\delta):=\frac{1}{800}\log(\frac{1}{8\delta})\frac{\gamma^2}{\varepsilon^2}\cdot\frac{e^{|\beta|\frac{1}{1-\gamma}}-3}{|\beta|^2}
\end{align}
then $\P_1(B)>\delta$ and the event $B$ is containing the event that the algorithm does not choose the optimal action $a_l$. 

Since this holds for all the $A$ hypotheses $H_l^i, l=1,2,\ldots, A$, it follows that the algorithm needs at least $\widetilde{T}(\varepsilon,\delta):=A\widetilde{T}(\varepsilon,\delta)$ samples to be $(\varepsilon,\delta)$-correct on the state $s_i$.

Next we use the fact that the structure of the MDPs is such that the information used to determine $\pi^*(s_i)$ carries no information to determine $\pi^*(s_j)$ for $i\neq j$.

If the number of total transition samples is less than $\frac{S}{2}\widetilde{T}(\varepsilon,\delta)$, then there must be at least $\frac{S}{2}$ states in the set $\{s_i\}_{i=1}^S$ for which some action (apart from $a_0$) has been tried no more than $\widetilde{T}(\varepsilon,\delta)$ times. Without loss of generality, we might assume that these are the states $\{s_i\}_{i=1}^{S/2}$ and that it is action $a_1$ that has been tried out at most $\widetilde{T}(\varepsilon,\delta)$ times in each of these states. 

Let $T_{i}$ be the number of times the algorithm has tried sampled any action on $s_i$ for $i\leq S/2$
Due to the structure of the MDPs in $\mathbb{M}$ it is sufficient to consider only the algorithms that yields an estimate of $\pi^{\mathcal{U}}_{T_i}$ based on samples from $s_i$ since any other samples can yield no information on $\pi^*(s_i)$.

Let us define the events $\Lambda_i := \{|V_{M_1}^*(s_i)-V^{\pi^\mathcal{U}_{T_i}}(s_i) |>\varepsilon\}$ for $i=1,\ldots, S$. Then, we have that $\Lambda_i$ and $\Lambda_j$ are conditionally independent given $T_i$ and $T_j$. We then have that for the MDP $M_1\in \mathbb{M}$ --the one corresponding to the hypothesis $H_1:=\{H_1^i|1\leq i\leq n\}$-- it holds that
    \begin{align*}
        \mathbb{P}\big(\{\Lambda_i^c\}_{1\leq i \leq S/2} \cap \{ T_i\leq \widetilde{T}(\varepsilon,\delta) \}_{1\leq i \leq S/2} \big) 
        & = \sum_{t_1=0}^{\widetilde{T}(\varepsilon,\delta)}\dots \sum_{t_{S/2}=0}^{\widetilde{T}(\varepsilon,\delta)} \mathbb{P}\big(\{T_i = t_i\}_{1\leq i\leq S/2}\big) \mathbb{P}\big(\{\Lambda_i^c\}_{1\leq i \leq S/2} \cap \{ T_i = t_i \}_{1\leq i \leq S/2} \big)
        \\
        & = \sum_{t_1=0}^{\widetilde{T}(\varepsilon,\delta)}\dots \sum_{t_{S/2}=0}^{\widetilde{T}(\varepsilon,\delta)} \mathbb{P}\big(\{T_i = t_i\}_{1\leq i\leq S/2}\big) \prod_{1\leq i\leq S/2}\mathbb{P}\big(\Lambda_i^c \cap \{T_i=t_i\}\big)
        \\
        & = \sum_{t_1=0}^{\widetilde{T}(\varepsilon,\delta)}\dots \sum_{t_{S/2}=0}^{\widetilde{T}(\varepsilon,\delta)} \mathbb{P}\big(\{T_i = t_i\}_{1\leq i\leq S/2}\big)  (1-\delta)^{S/2}\,,
    \end{align*}
where the first line follows from the  law of total probability, and the second line from independence. We now have directly that 
    \begin{align*}
        \mathbb{P}\Big(\{\Lambda_i^c\}_{1\leq i \leq S/2} \Big| \{ T_i\leq \widetilde{T}(\varepsilon,\delta) \}_{1\leq i \leq S/2}\Big)  \leq (1-\delta)^{\frac{S}{2}}\,.
    \end{align*}
Thus, if the total number of transitions $T$ is less than $\frac{S} {2}\widetilde{T}(\varepsilon,\delta)$ on the MDP $M_0$ corresponding to the hypothesis $H_0:\{H_0^i|1\leq i\leq n\}$, then on $M_1$ it holds that
\begin{align*}
    \mathbb{P}(\|V^*-V^{\pi^\mathcal{U}_T}\|>\varepsilon) & \geq \mathbb{P}\bigg( \bigcup_{1\leq i\leq S/2}\Lambda(z)\bigg)
    \\
    & = 1- \mathbb{P}\bigg( \bigcap_{1\leq i\leq S/2}\Lambda_i^c\bigg)
    \\
    & \geq 1- \mathbb{P}\Big(\{\Lambda_i^c\}_{1\leq i \leq S/2} \, \Big| \, \{ T_{z_i}\leq \widetilde{T}(\varepsilon,\delta) \}_{1\leq i \leq S/2} \Big)
    \\
    & \geq 1-(1-\delta)^{S/2} 
    \\
    & \geq \frac{\delta S}{4},
\end{align*}
when $\frac{\delta S}{2}\leq 1$ by Lemma \ref{lemma:usefulineq}. By setting $\delta' = \delta \frac{S}{4}$ and substituting back $S'$ and $A'$, we obtain the result. 
This shows that if the number of samples is smaller than 
\begin{align*}
    T = \frac{(S'-2)(A'-1)}{1600}\log(\frac{S'-2}{32\delta})\frac{\gamma^2}{\varepsilon^2}\cdot\frac{e^{|\beta|\frac{1}{1-\gamma}}-3}{|\beta|^2}
    \end{align*}
    on $M_0$, then on $M_1$ it holds that $\P(\|V^*-V^{\pi^\mathcal{U}_T}\|>\varepsilon)>\delta$.

\end{document}